\let\clineorig\cline
\def\ie{{\it i.e.,\ \/}}
\def\eg{{\it e.g.,\ \/}}
\def\defeq{{\,\stackrel{\Delta}{=}}\,}
\def\nn{{\nonumber}}
\newcommand{\Tau}{\mathcal{T}}
\newcommand{\ACK}{\mathop{\text{ACK}}}
\newtheorem{theorem}{Theorem}
\newtheorem{lemma}{Lemma}
\newtheorem{corollary}{Corollary}
\newtheorem{definition}{Definition}%
\begin{document}

\title[Low-Complexity Algorithm for Restless Bandits with Imperfect Observations]{Low-Complexity Algorithm for Restless Bandits with Imperfect Observations}


\author*[1,2]{\fnm{Keqin} \sur{Liu}}\email{kqliu@nju.edu.cn}

\author[3]{\fnm{Richard} \sur{Weber}}\email{rrw1@cam.ac.uk}


\author[2]{\fnm{Chengzhong} \sur{Zhang}}\email{171840780@smail.nju.edu.cn}

\affil*[1]{\orgdiv{Department of Mathematics}, \orgname{Nanjing University}, \orgaddress{\city{Nanjing}, \postcode{210093}, \country{China}}}

\affil[2]{\orgname{National Center for Applied Mathematics}, \orgaddress{\city{Nanjing}, \postcode{210093}, \country{China}}}

\affil[3]{\orgdiv{Department of Mathematics}, \orgname{University of Cambridge}, \orgaddress{\city{Cambridge}, \postcode{CB3 0WB}, \country{UK}}}


\abstract{We consider a class of restless bandit problems that finds a broad
application area in reinforcement learning and stochastic optimization. We consider $N$ independent discrete-time
Markov processes, each of which had two possible states: 1 and 0
(`good' and `bad'). Only if a process is both in state 1 and
observed to be so does reward accrue. The aim is to maximize the
expected discounted sum of returns over the infinite horizon subject
to a constraint that only $M$ $(<N)$ processes may be observed at
each step. Observation is error-prone: there are known probabilities
that state 1 (0) will be observed as 0 (1). 
From this one knows, at any time $t$, a probability that process $i$ is
in state 1.
The resulting system may be modeled as a restless multi-armed
bandit problem with an information state space of uncountable
cardinality.
Restless bandit problems with even finite state spaces are
PSPACE-HARD in general. We propose a novel approach
for simplifying the dynamic programming equations of this class of
restless bandits and develop a low-complexity algorithm that
achieves a strong performance and is readily extensible to the
general restless bandit model with observation errors. Under certain
conditions, we establish the existence (indexability) of Whittle
index and its equivalence to our algorithm. When those conditions do
not hold, we show by numerical experiments the near-optimal
performance of our algorithm in the general parametric space. Furthermore,
we theoretically prove the optimality of our algorithm for
homogeneous systems.}

\keywords{restless bandits, continuous state space, observation errors, value functions, index policy}



\maketitle

\section{Introduction}

The exploration-exploitation (EE) dilemma is well posed in
optimization-over-time problems and mathematically modeled in various
forms for reinforcement learning, to which a major category,
\emph{multi-armed bandits} (MAB) belongs. In the classical MAB model, a
player chooses one out of $N$ statistically independent arms to pull
and possibly accrues reward determined by the state of the chosen arm
which transits to a new state according to a known Markovian
rule~\citep{GGW2011}. The states of other arms remain frozen. The
objective is to maximize the expected total discounted reward summed
over times $t=1,2,\ldots\ $ to an infinite time horizon with discount
factor $\beta\in(0,1)$,
\begin{eqnarray}
	\max_{\pi\in\Pi}\mathbbm{E}_{\pi}\left[\sum_{t=1}^{\infty}\beta^{t-1}R(t)\Big\vert
	S(t),A(t)\right].
\end{eqnarray}
The expectation is taken under some policy $\pi$, chosen from the set
of all feasible policies $\Pi$; $S(t)$ is the joint state of all arms
at time $t$, $A(t)\in\{1,2,\ldots,N\}$ is the arm pulled at $t$ and
$R(t)$ is the reward thus obtained. It follows from standard
theory of Markov decision processes that there must exist an optimal
stationary policy~$\pi^*$, independent of time~$t$. If each arm's
state space has cardinality $K$, then the joint state space has size
$K^N$. This means that a dynamic programming solution to the problem
will have running time that grows geometrically as the number of arms
increases. \citet{G1979} solved the problem by showing the optimality of an \emph{index} policy, %
\ie for each state of each arm there exists an index depending solely on
the parameters of that arm; it is then optimal at each time to choose
the arm whose state has highest index. The running time of the Gittins
index policy grows only linearly with the number of arms as they are
decoupled when computing the index function(of the states of each
arm). \citet{W1988} generalized Gittins index to the \emph{restless}
MAB model in which those arms that are not chosen may also change
states and produce reward. Whittle's generalization has been shown to
perform very well in theoretical and numerical studies (see,
\eg\citet{WW1990,WW1991,LZ2010,HF2017,ZJW2019,BS2020,LZ2021,GGY2021}). In general, however,
it is difficult to theoretically establish the condition that is necessary for the
Whittle index to exist (so called \emph{indexability}) and to solve for the closed-form Whittle index when it does exists due to the curse of dimensionality (see Sec.~\ref{sec:modelBelief}). For finite-state models, \citet{GGK2023} proposed an efficient algorithm to numerically test indexability and compute Whittle index. A natural question to ask if one can transform a bandit with a continuous-state space into a finite-state one by discretization. Unfortunately, we found that how fine the discretization needs to be given the system paramters is itself a difficult problem. Furthermore, a finer discretization inevitably leads to a larger state transition matrix with increased algorithmic complexity and unpredictability of the steady-state system performance. This motivates us to consider alternative approaches to deal with bandits with infinite state spaces as addressed in this paper. In terms of searching for the general optimal policy, \citet{PT1999} have shown that
the restless MAB with a finite state space is PSPACE-HARD in general. With an infinite state space, a restless bandit problem yields practical difficulties in implementing purely numerical methods as discussed above. In this paper, we show that for our particular Markovian model with an infinite state space, Whittle index policy can be efficiently implemented with satisfactory accuracy after theoretical analysis on the rich mathematical structure of the problem.

In this paper, we extend the work in~\citet{LZ2010} (for a perfect
observation model) and~\citet{LZK2010} (for the myopic policy on
stochastically identical arms) to build a near-optimal algorithm with
low complexity for a class of restless bandits with an infinite state
space and an imperfect observation model. Our model also belongs to
the general framework of partially observable Markov decision
processes (POMDP)~\citep{S1978}. Consider~$N$ processes each of which
evolves on a $2$-state Markov chain whose state is observed if and
only if the process is chosen. Furthermore, the observation is
\emph{error-prone}: state~$1$ may be observed as~$0$ and vice
versa. Each process is referred to as an arm. At time~$t$, the player
obtains reward of amount~$B_n$ if and only if arm~$n$ is currently
chosen and accurately observed in state~$1$. Under resource
constraints, the player's goal is to select $M~(M<N)$ arms at each
time and maximize the long-term reward. By formulating \emph{the
	belief vector} as the system state for decision making, we show that
the indexability is satisfied under certain conditions. Furthermore,
we propose an efficient algorithm to compute an approximate Whittle
index that achieves a near-optimal performance in general, even if the
conditions for indexability do not hold.

\citet{meshram2018whittle},  \citet{mehta2018rested} and \citet{kaza2019sequential} considered similar models (except for some nuances) and established indexability under much stricter conditions on the system parameters. For example, all the three papers require that the Markov transition probabilities of each arm have differences bounded by~$1/3$ while we do not need any restriction on the transition probabilities. Furthermore, the three papers require that the discount factor~$\beta$ is less than~$1/3$ while our condition on~$\beta$ is a more relaxed closed-form expression of the system parameters. In terms of the computation of the Whittle index, their algorithm is a direct application of general reinforcement learning while our algorithmic framework is based on the detailed analysis of the value functions with a quick convergence to the exact Whittle index function. In this paper, we also plot the performance of the optimal policy to demonstrate the near-optimality of our algorithm in addition to the comparison with the myopic policy. For homogeneous systems (stochastically identical arms), we show that our algorithm is equivalent to the myopic policy and theoretically prove its optimality under certain conditions. \citet{WCYW2018} also considered our model and assumed the optimality of the threshold policy for a single arm while using a very coarse linear approximation to compute the Whittle index function (the key step (a) for the second equality in the proof of Lemma~6 in \citet{WCYW2018} is incorrect). In this paper, we rigorously prove the optimality of the threshold policy for a single arm and establish the indexability under certain conditions and subsequently construct an efficient algorithm for computing the Whittle index function with arbitrary precision with its optimality numerically verified in general and formally proved for a class of homogeneous systems.

The rest of this paper is organized as follows: Sec.~\ref{sec:main} presents our problem formulation and main results on the optimal threshold policy and indexability. Sec.~\ref{sec:WhittleAlgor} presents our algorithm with design details. Sec.~\ref{sec:optimality} presents a theoretic proof of the optimality for homogeneous arms. Sec.~\ref{sec:numericalCon} concludes this paper and provides some future research directions in this field.

\section{Main Results}\label{sec:main}

Consider a restless MAB having $N$ internal $2$-state Markov chains
(arms) of potentially \emph{different} transition probabilities. At
each time~$t$, the player chooses to observe the states of $M$ $(<N)$
arms. Let $S\in\{0~\text{(bad)},1~\text{(good)}\}$ denote the current
state of an arm and let $O$ denote its observation outcome (detection
outcome). The error probabilities are ~$\delta=\Pr(O=1\mid S=0)$
and~$\epsilon=\Pr(O=0\mid S=1)$, \ie the probabilities of miss
detection and false alarm, respectively, in the observation model.
In~\citet{Levy2008Book}, it was shown that the error probabilities~$\delta$ and~$\epsilon$ follow the curve of receiver operating characteristics
(ROC) under the optimal detector that makes $1-\delta$ a concave increasing function from~$0$ to~$1$ over~$\epsilon$.
This matches the intuition that making a detector more sensitive will reduce~$\delta$ but increase~$\epsilon$. Since the optimal detector design is a solved problem and not the focus of this paper, we simply assume that~$\delta$ and~$\epsilon$ are given. If arm $n$ in state $S=1$ is observed in state~$1$
(\ie $S=O=1$), then the player accrues $B_n$ units of reward
from this arm. One of many application examples of this observation
model is to {\em cognitive radios}, where a secondary user aims to
utilize a frequency band (channel/arm) currently unused by the primary
users. Due to energy and policy constraints on the sensor of the
secondary user, only a subset of channels can be sensed at each time
and if any of them is sensed idle ($O=1$), the user can send certain
packets over it to its receiver and obtain an $\ACK$ (acknowledgement)
in the end of the time slot if the channel is indeed idle ($S=1$);
otherwise no $\ACK$ from this channel would be received. Then the
reward~$B_n$ is just the bandwidth of channel~$n$. Clearly, the hard
constraint here should be on the miss detection probability~$\delta$
to guarantee the satisfaction of the primary users, \ie the
disturbance (when a secondary user senses a busy channel as idle and
subsequently sends data over it) to the primary users should be
capped.

\subsection{System Model and Belief Vector}\label{sec:modelBelief}
At each discrete time~$t$, the internal state ($0/1$) of an arm cannot
be observed before deciding whether or not to observe the
arm. Therefore, we cannot use the states of the Markov chains as the
system state for decision making. Applying the general POMDP theory to
our model the \emph{belief state vector} consisting of probabilities
that arms are in state~$1$ given all past observations is a sufficient
statistics for making future decisions~\citep{S1978}:
\begin{align}
	\boldsymbol{\omega}(t)&= (\omega_1(t),\omega_2(t),\cdots,\omega_N(t)),\label{eqn:beliefVector}\\
	\omega_n(t)&=\Pr(S_n(t)=1\mid\text{past observations on arm}~n),~~\forall \ n\in\{1,\cdots,N\},
\end{align}
where~$\omega_n(t)$ is the \emph{belief state} of arm~$n$ at time~$t$
and~$S_n(t)$ its internal state. According to the Bayes' rule, the
belief state (of any arm) itself evolves as a Markov chain with an
infinite state space:%
{
	\begin{align} \omega_{n}(t+1)&=
		\begin{cases}
			p_{11}^{(n)}, & n \in A(t), \ACK_n(t)~(S_n(t)=1, O_n(t)=1) \\[1pt]
			\Tau_n\left(\frac{\epsilon\omega_{n}(t)}{\epsilon\omega_{n}(t)+1-\omega_{n}(t)}\right), & n \in A(t), \text{no} \ACK_n(t) \\[1pt]
			\Tau_n(\omega_{n}(t)), & n \notin A(t)
		\end{cases},\label{eqn:beliefUpdate}
	\end{align}
	\begin{align}
		\Tau _n(\omega_{n}(t))&=\omega_{n}(t)
		p^{(n)}_{11}+(1-\omega_{n}(t))p^{(n)}_{01},\label{eqn:1Tau}~~~~~~~~~~~~~~~~~~~~~~~~~~~~~~~~~~~~~~~
\end{align}}%
where $A(t)\subset\{1,2,\ldots,N\}$ is the set of arms chosen at
time $t$ with $\vert A(t)\vert=M$, $S_n(t)$ and $O_n(t)$ are respectively the
state and observation from arm $n$ at time $t$ if $n\in A(t)$,
$\ACK_n(t)$ the acknowledgement of successful utilization of arm~$n$
for slot~$t$, $\Tau_n(\cdot)$ the one-step belief update operator
without observation, and
$\textbf{P}^{(n)}=\{p^{(n)}_{ij}, {i,j\in\{0,1\}}\}$ the transition
matrix of the internal Markov chain of arm~$n$. Note that when $\epsilon=0$, all three expressions of the next belief state in~\eqref{eqn:beliefUpdate} become linear functions and the problem is reduced to that in \cite{LZ2010} for perfect observations where the value functions for dynamic programming can be directly solved in closed-form. Later, we will see that when $\epsilon\neq0$, the value functions are very difficult to analyze except for a few general properties. Without a fine-grain analysis on the value functions, the indexability and Whittle index cannot be established for our model. Our approach is to start with a finite time horizon and then utilize the backward induction (on time horizon) methodology until we take limits of corresponding functions as the time horizon goes to infinity. The key idea is to obtain analytic bounds on the value functions and their derivatives as functions of the system parameters instead of just numerical computations of these functions. We will show that these bounds, together with some detailed properties of the value functions, are sufficient for our purpose. From~\eqref{eqn:1Tau}, the
$k$-step belief update of an unobserved arm for $k$ consecutive slots
starting from any belief state~$\omega$ is
\begin{gather}\label{eqn:kTau} \Tau_n^{k}(\omega)=\frac{p^{(n)}_{01}-(p^{(n)}_{11}-p^{(n)}_{01})^{k}(p^{(n)}_{01}-(1+p^{(n)}_{01}-p^{(n)}_{11})\omega)}{1+p^{(n)}_{01}-p^{(n)}_{11}}.
\end{gather}
For simplicity of notations, we denote~$\Tau_n^1(\cdot)$ by~$\Tau_n(\cdot)$.
\\ \hspace*{0.8cm}At time~$t=1$, the initial belief state~$\omega_n(1)$ of arm~$n$ can
be set as the stationary distribution~$\omega_{n,o}$ of the internal
Markov chain\footnotemark\footnotetext{Here we assume the internal
	Markov chain with transition matrix~$\textbf{P}^{(n)}$ is
	irreducible and aperiodic.}:
\begin{gather}
	\omega_n(1) = \omega_{n,o} =\lim_{k\rightarrow\infty}\Tau_n^k(\omega')=\frac{p^{(n)}_{01}}{p^{(n)}_{01}+p^{(n)}_{10}},\label{eq:stationary}
\end{gather}
where~$\omega_{n,o}$ is the unique solution to~$\Tau_n(\omega)=\omega$
and~$\omega'\in[0,1]$ an arbitrary probability. Given the initial
belief vector
$\boldsymbol{\omega}(1)=(\omega_1(1),\omega_2(1),\ldots,\omega_N(1))$, we arrive at
the following \emph{constrained} optimization problem:
\begin{eqnarray}
	&\max\limits_{\pi:\boldsymbol{\omega}(t)\rightarrow
		A(t)}\mathbbm{E}_{\pi}\left[\sum\limits_{t=1}^{\infty}\beta^{t-1}R(t)\Big\vert \boldsymbol{\omega}(1)\right],\label{objectStrict}\\
	&\quad\mbox{subject to}\quad\vert A(t)\vert=M,\quad t\ge1.\label{constraintStrict}
\end{eqnarray}
Now the decision problem has a countable state space as modelled by the belief vector for a fixed initial~$\boldsymbol{\omega}(1)$ and an uncountable state space for an arbitrarily chosen~$\boldsymbol{\omega}(1)$. It is clear that fixing~$\boldsymbol{\omega}(1)$, the action-dependent belief
vector~$\boldsymbol{\omega}(t)$ takes possible values growing geometrically with
time~$t$, leading to a high-complexity in solving the problem; this is
the so-called \emph{curse of dimensionality}. In the following, we
adopt Whittle's original idea of Lagrangian relaxation to decouple
arms for an index policy and show some crucial properties of the value
functions of a single arm.

\subsection{Arm Decoupling by Lagrangian Relaxation}
\label{sec:singleArm}
\begin{align}
	\max\limits_{\pi:\boldsymbol{\omega}(t)\rightarrow
		A(t)}&\mathbbm{E}_{\pi}\left[\sum_{t=1}^{\infty}\beta^{t-1}\sum_{n=1}^N\mathbbm{1}_{(n\in
		A(t))}\cdot S_n(t)\cdot O_n(t)\cdot B_n\Big\vert \boldsymbol{\omega}(1)\right]\label{objectRelax}\\
	\mbox{subject to\ }&\mathbbm{E}_{\pi}\left[\sum_{t=1}^{\infty}\beta^{t-1}\sum_{n=1}^N\mathbbm{1}_{(n\notin A(t))}\Big\vert\boldsymbol{\omega}(1)\right]=\frac{N-M}{1-\beta}.\label{constraintRelax}
\end{align}
Clearly constraint~\eqref{constraintRelax} is a relaxation on the
player's action~$A(t)$ from~\eqref{constraintStrict}. Applying the
Lagrangian multiplier~$\mu$ to constraint~\eqref{constraintRelax},
we arrive at the following {\em unconstrained} optimization problem:

\begin{gather}
	\max\limits_{\pi:\boldsymbol{\omega}(t)\rightarrow
		A(t)}\mathbbm{E}_{\pi}\left[\sum_{t=1}^{\infty}\beta^{t-1}\sum_{n=1}^N\left[\mathbbm{1}_{(n\in
		A(t))}S_n(t)O_n(t)B_n+\mu\cdot\mathbbm{1}_{(n\notin
		A(t))}\right]\Big\vert\boldsymbol{\omega}(1)\right].
	\label{max: decouple}
\end{gather}
\normalsize
Fixing~$\mu$, the above optimization is equivalent to~$N$
\emph{independent} unconstraint optimization problem as shown below:
for each $n\in\{1,2,\ldots,N\}$,

\begin{gather}
	\max\limits_{\pi:\omega_n(t)\rightarrow \{0,1\}}\mathbbm{E}_{\pi}\left[\sum_{t=1}^{\infty}\beta^{t-1}\left[\mathbbm{1}_{(n\in A(t))}S_n(t)O_n(t)B_n+\mu\cdot\mathbbm{1}_{(n\notin A(t))}\right]\Big\vert\omega_n(1)\right].
	\label{max: single}
\end{gather}\normalsize
Here~$\pi$ is a single-arm policy that maps the belief state of the
arm to the binary action~$u=1$ (chosen/activated) or~$u=0$
(unchosen/made passive). It is thus sufficient to consider a single
arm for solving problem~\eqref{max: decouple}. For simplicity, we will
drop the subscript~$n$ in consideration of a single-armed bandit
problem without loss of generality. Let~$V_{\beta,m}(\omega)$ denote
the value of~\eqref{max: single} with~$\mu=m$
and~$\omega_n(1)=\omega$, it is straightforward to write out the
dynamic equation of the single-armed bandit problem as follows:
\begin{equation}\label{eq: long time value function}
	V_{\beta,m}(\omega) = \max\{V_{\beta,m}(\omega;u=1);V_{\beta,m}(\omega;u=0)\},
\end{equation}
where~$V_{\beta,m}(\omega;u=1)$ and~$V_{\beta,m}(\omega;u=0)$ denote,
respectively, the maximum expected total discounted reward that can be
obtained if the arm is activated or made passive at the current belief
state~$\omega$, followed by an \emph{optimal} policy in subsequent
slots. Since we consider the infinite-horizon problem, a stationary
optimal policy can be chosen and the time index~$t$ is not needed
in~\eqref{eq: long time value function}. Define the nonlinear
operator~$\phi(\cdot)$ as
\[
\phi(\omega)=\frac{\epsilon\omega}{\epsilon\omega+1-\omega}.
\]
It is easy to see that~$\Tau\circ\phi(\cdot)$ is Lipschitz continuous
on~$[0,1]$:
{\small\begin{align}
		\Big\vert\Tau\left(\tfrac{\epsilon\omega}{\epsilon\omega+1-\omega}\right)-\Tau\left(\tfrac{\epsilon\omega'}{\epsilon\omega'+1-\omega'}\right)\Big\vert&=\Big\vert\tfrac{p_{11}\epsilon\omega+(1-\omega)p_{01}}{\epsilon\omega+1-\omega}-\tfrac{p_{11}\epsilon\omega'+(1-\omega')p_{01}}{\epsilon\omega'+1-\omega'}\Big\vert\label{eqn:TauPhiLC}\\[4pt]
		&=\Big\vert\tfrac{\epsilon(p_{11}-p_{01})(\omega-\omega')}{(1-(1-\epsilon)\omega)(1-(1-\epsilon)\omega')}\Big\vert\leq\frac{\vert p_{11}-p_{01}\vert}{\epsilon}\vert\omega-\omega'\vert.
\end{align}}
We assume that~$\epsilon\neq0$ (otherwise the problem is reduced to
that considered in~\citealp{LZ2010}) and~$p_{11}\neq p_{01}$ (otherwise
the belief update is independent of observations or actions and the
problem becomes trivial). Without loss of generality, set $B=1$. We
have
\begin{gather}
	\left\{
	\begin{array}{ll}
		V_{\beta,m}(\omega;u=1)=(1-\epsilon)\omega+\beta\big[(1-\epsilon)\omega V_{\beta,m}(p_{11})\label{eqn:vf01}\\[5pt]
		\quad\quad\quad\quad\quad\quad\quad\quad+(1-(1-\epsilon)\omega)V_{\beta,m}(\Tau(\phi(\omega)))\big], \\[5pt]
		V_{\beta,m}(\omega;u=0)=m+\beta V_{\beta,m}(\Tau(\omega)).
	\end{array}
	\right.
\end{gather}
Define \emph{passive set}~$P(m)$ as the set of all belief states
in which taking the passive action~$u=0$ is optimal:
\begin{eqnarray}
	P(m)\defeq \{\omega:~V_{\beta,m}(\omega;u=1)\le V_{\beta,m}(\omega;u=0)\}.\label{def:passive}
\end{eqnarray}
Notice that the immediate reward under the active action cannot exceed~$1$ so it is optimal to always make the arm passive if the immediate reward under the passive action exceeds~$1$ ($m\ge1$). This is because we would obtain the maximum immediate reward equal to~$m$ at each time step regardless of the state transitions in this case. On the other hand, if $m<-1/{(1-\beta)}$ then the optimal action must be to activate the arm. To see this, note that the total discounted reward by any policy consists of two parts: the reward under the active actions and the reward under the passive actions. If the optimal policy is to make the arm passive now when $m<-1/{(1-\beta)}$, then the reward obtained under the future active actions by this policy must be greater than $1/{(1-\beta)}$ otherwise the total discounted reward would be negative, contradicting the optimality of the policy since any policy that always activates the arm achieves a nonnegative total discounted reward so should the optimal policy. This is again a contradiction since the total discounted reward under the active actions is upper bounded by~$1/{(1-\beta)}$. Consequently, the passive set $P(m)$ changes from the empty set to
the closed interval $[0,1]$ as~$m$ increases from~$-\infty$ to
$\infty$. However, such change may not be monotonic as~$m$
increases. But if $P(m)$ does increase monotonically with~$m$, then
for each value~$\omega$ of the belief state, one can define the unique
$m$ that makes it join~$P(m)$ and stay in the set
forever. Intuitively, such~$m$ measures in a well-ordered manner the
attractiveness of activating the arm in belief state~$\omega$ compared
to other belief states: the larger is the~$m$ that is required for it
to be passive, the more is the incentive to activate the arm in belief
state~$\omega$, even in the problem without~$m$. This Lagrangian
multiplier~$m$ is thus called `subsidy for passivity' by Whittle who
formalized the following definition of {\em indexability} and {\em Whittle index}~\citep{W1988}.
\begin{definition}\label{def:indexability}
	A restless multi-armed bandit is {\em indexable} if for each
	single-armed bandit in a problem with subsidy $m$ for passivity, the
	set of arm states $P(m)$ in which passivity is optimal increases
	monotonically from the empty set to the whole state space as $m$
	increases from $-\infty$ to $+\infty$. Under indexability, the
	\emph{Whittle index} of an arm state is defined as the infimum
	subsidy $m$ such that the state remains in the passive set.
\end{definition}
For our model in which the arm state is given by the belief vector, the indexability is equivalent to the following:
\begin{eqnarray}
	\mbox{If}~~~V_{\beta,m}(\omega;u=1)\le
	V_{\beta,m}(\omega;u=0),~\mbox{then}~  \nonumber \\
	\forall~m'>m,~V_{\beta,m'}(\omega;u=1)\le V_{\beta,m'}(\omega;u=0).
\end{eqnarray}
Under indexability, the Whittle index~$W(\omega)$ of arm state~$\omega$ is defined as
\begin{eqnarray}
	W(\omega)\defeq \inf\{m:~V_{\beta,m}(\omega;u=1)\le V_{\beta,m}(\omega;u=0)\}.\label{def:whittleIdx}
\end{eqnarray}
In the following we derive useful properties of the value functions
$V_{\beta,m}(\omega;u=1)$, $\ V_{\beta,m}(\omega;u=0)$ and
$V_{\beta,m}(\omega)$. Our strategy is to first establish those
properties for finite horizons and then extend them to the infinite
horizon by the uniform convergence of the value functions of the
former to the latter. Define the $T$-horizon value function
$V_{1,T,\beta,m}(\omega)$ as the maximum expected total discounted
reward achievable over the next $T$ time slots starting from the
initial belief state $\omega$. Then
\begin{eqnarray}
	V_{1,T,\beta,m}(\omega) = \max\{V_{1,T,\beta,m}(\omega;u=1);V_{1,T,\beta,m}(\omega;u=0)\},
\end{eqnarray}
where~$V_{1,T,\beta,m}(\omega;u=1)$ and~$V_{1,T,\beta,m}(\omega;u=0)$
denote, respectively, the maximum expected total discounted reward
achievable given the initial active and passive actions over the
next~$T$ time slots starting from the initial belief state~$\omega$:

\begin{align}
	V_{1,T,\beta,m}(\omega;u=1)&=(1-\epsilon)\omega+(1-\epsilon)\omega\beta
	V_{1,T-1,\beta,m}(p_{11}) \nonumber\\
	&~~~~+(1-(1-\epsilon)\omega)
	\beta V_{1,T-1,\beta,m}\left(\Tau(\tfrac{\epsilon\omega}{1-(1-\epsilon)\omega})\right),\label{eqn:finite1} \\
	V_{1,T,\beta,m}(\omega;u=0)&=m+\beta V_{1,T-1,\beta,m}(\Tau(\omega)),\label{eqn:finite0}~~~~~~~~~~~~~~~\\
	V_{1,0,\beta,m}(\cdot)&\equiv 0.~~~~~~~~~~~~~~~~~~~~~~~~~~~~~~~~~~~
\end{align}
From the above recursive equations, we can analyze $V_{1,T,\beta,m}(\omega)$ by backward induction on~$T$. It is easy to see that for any~$\omega$,
\begin{eqnarray}
	V_{1,1,\beta,m}(\omega;u=1)=(1-\epsilon)\omega,\quad
	V_{1,1,\beta,m}(\omega;u=0)=m.\label{eqn:lastStep}
\end{eqnarray}
Therefore~$V_{1,T,\beta,m}(\omega)$ is the maximum of two linear equations and thus piecewise linear and convex for~$T=1$ (in both~$\omega$ and~$m$). Assume that~$V_{1,T-1,\beta,m}(\omega)$ is piecewise linear and convex. The Bayes' rule shows that the following term
\begin{eqnarray}
	(1-(1-\epsilon)\omega)\beta V_{1,T-1,\beta,m}\left(\Tau(\tfrac{\epsilon\omega}{1-(1-\epsilon)\omega})\right)
\end{eqnarray}
is piecewise linear and convex since the leading coefficient $(1-(1-\epsilon)\omega)$ also appears as the
denominator of the argument of the linear operator~$\Tau$ in $V_{1,T-1,\beta,m}(\cdot)$ assumed to be piecewise linear and convex by the induction hypothesis. Henceforth, the
recursive equation set~\eqref{eqn:finite1} and~\eqref{eqn:finite0}
shows that~$V_{1,T,\beta,m}(\omega)$ is the maximum of two convex and
piecewise linear functions and thus piecewise linear and convex for
any~$T>1$ (in both~$\omega$ and~$m$). Motivated by the Lipschitz
continuity of~$\Tau\circ\phi$, we show in Lemma~\ref{lm:LipCont} that
$V_{1,T,\beta,m}(\omega)$ is also Lipschitz continuous under certain
conditions. In the following, we first establish a monotonic property
of $V_{1,T,\beta,m}(\omega)$ in the case of $p_{11}>p_{01}$
(positively correlated Markov chain).
\begin{lemma}\label{lm:posVd}
	If $p_{11}> p_{01}$, then $V_{1,T,\beta,m}(\omega)$ is monotonically increasing with~$\omega\in[0,1]$ for any $T\ge1$.
\end{lemma}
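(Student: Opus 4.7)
The plan is to prove the lemma by backward induction on the horizon $T$. The base case $T=1$ is immediate from \eqref{eqn:lastStep}, since $V_{1,1,\beta,m}(\omega) = \max\{(1-\epsilon)\omega,\, m\}$ is the maximum of a nondecreasing linear function and a constant, hence nondecreasing in $\omega$.

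For the inductive step, assume that $V_{1,T-1,\beta,m}(\cdot)$ is monotonically increasing on $[0,1]$. Since the pointwise maximum of two nondecreasing functions is nondecreasing, it suffices to prove that each of $V_{1,T,\beta,m}(\omega;u=0)$ and $V_{1,T,\beta,m}(\omega;u=1)$ is nondecreasing in $\omega$. The passive case $u=0$ is straightforward: because $p_{11}>p_{01}$, the map $\Tau(\omega)=p_{01}+(p_{11}-p_{01})\omega$ is strictly increasing, and the composition with $V_{1,T-1,\beta,m}$ is then nondecreasing by the induction hypothesis; adding the constant $m$ and multiplying by $\beta>0$ preserves this.

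The active case requires more care, as the two terms in \eqref{eqn:finite1} come with coefficients $(1-\epsilon)\omega$ and $1-(1-\epsilon)\omega$ that move in opposite directions with $\omega$. I would rewrite \eqref{eqn:finite1} as
\[
V_{1,T,\beta,m}(\omega;u=1) = (1-\epsilon)\omega\cdot A + (1-(1-\epsilon)\omega)\cdot g(\omega),
\]
where $A\defeq 1+\beta V_{1,T-1,\beta,m}(p_{11})$ and $g(\omega)\defeq \beta V_{1,T-1,\beta,m}(\Tau(\phi(\omega)))$. A direct computation shows $\phi'(\omega)=\epsilon/(1-(1-\epsilon)\omega)^2\ge 0$, so $\phi$ is nondecreasing on $[0,1]$; since $\Tau$ is also nondecreasing under $p_{11}>p_{01}$, the induction hypothesis gives that $g(\omega)$ is nondecreasing. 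Moreover, $\phi(\omega)\in[0,1]$ implies $\Tau(\phi(\omega))\le p_{11}$, so by induction $V_{1,T-1,\beta,m}(p_{11})\ge V_{1,T-1,\beta,m}(\Tau(\phi(\omega)))$, and hence $A-g(\omega)\ge 1>0$. For $\omega_1<\omega_2$, the identity
\[
V_{1,T,\beta,m}(\omega_2;u=1)-V_{1,T,\beta,m}(\omega_1;u=1) = (1-\epsilon)(\omega_2-\omega_1)[A-g(\omega_1)] + (1-(1-\epsilon)\omega_2)[g(\omega_2)-g(\omega_1)]
\]
then exhibits the difference as a sum of two nonnegative terms, closing the induction.

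The main obstacle is the active-action term: the convex-combination structure with opposing weights could easily allow a decreasing profile if $g(\omega)$ were too large at small $\omega$, so the key non-routine step is identifying the bound $\Tau(\phi(\omega))\le p_{11}$ that forces $A-g(\omega)\ge 1$. This is precisely where the assumption $p_{11}>p_{01}$ enters, both through monotonicity of $\Tau$ and through the endpoint bound on $g$. Everything else is routine algebraic rearrangement and propagation of the induction hypothesis.
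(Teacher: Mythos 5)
Your proof is correct, and it shares the paper's skeleton --- induction on $T$, reduction to showing the passive and active branches are each nondecreasing, monotonicity of $\Tau$ and $\phi$, and the key endpoint bound $\Tau(\phi(\omega))\le p_{11}$ --- but the technical execution is genuinely different. The paper differentiates: it computes the right derivatives $f_T'$ and $g_T'$ of the piecewise linear value functions in \eqref{eqn:diffT}, shows $g_T'(\omega)=\beta(p_{11}-p_{01})V'_{1,T,\beta,m}(\Tau(\omega))\ge 0$, and bounds $f_T'(\omega)\ge(1-\epsilon)+\frac{\epsilon\beta(p_{11}-p_{01})}{1-(1-\epsilon)\omega}V'_{1,T,\beta,m}(\Tau\circ\phi(\omega))>0$, where the discarded middle term $(1-\epsilon)\beta\left[V_{1,T,\beta,m}(p_{11})-V_{1,T,\beta,m}(\Tau\circ\phi(\omega))\right]\ge 0$ is exactly your observation that $A-g(\omega)\ge 1$. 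You obtain the same conclusion by a two-point finite-difference identity, writing the active value as $(1-\epsilon)\omega A+(1-(1-\epsilon)\omega)g(\omega)$ and splitting the increment between $\omega_1<\omega_2$ into two manifestly nonnegative pieces (your identity checks out algebraically). Your route is more elementary and slightly more robust: it never invokes piecewise linearity, right derivatives, or the fact that a continuous function with nonnegative right derivative is nondecreasing --- only the induction hypothesis and monotonicity of the update maps. What the paper's derivative machinery buys is reuse: the formulas \eqref{eqn:diffT} and the bounds extracted from them are the backbone of the subsequent Lipschitz estimate (Lemma~\ref{lm:LipCont}) and the derivative-ordering result (Lemma~\ref{lm:orderDiff}), so computing $f_T'$ and $g_T'$ here is an investment rather than a detour, whereas your argument, taken alone, would leave that infrastructure still to be built. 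One cosmetic point: you prove ``nondecreasing,'' which is what the paper actually means by ``monotonically increasing'' (e.g.\ $V_{1,1,\beta,m}(\omega)=\max\{(1-\epsilon)\omega,m\}$ is constant where the passive action wins), so there is no gap.
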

\begin{proof}
	Since $V_{1,T,\beta,m}(\omega)$ is piecewise linear, it is
	differentiable almost everywhere except on a null set (under the
	Lebesgue measure on $\mathbbm{R}$) consisting of finite points among
	which both the left and right derivatives at any point exist but not
	equal. To prove that the continuous function
	$V_{1,T,\beta,m}(\omega)$ is monotonically increasing with~$\omega$,
	we only need to show
	\begin{eqnarray}
		V'_{1,T,\beta,m}(\omega)\ge 0,\quad\forall\omega\in(0,1), \label{eqn:posDiff}
	\end{eqnarray}
	where $V'_{1,T,\beta,m}(\omega)$ denotes the \emph{right} derivative
	of $V_{1,T,\beta,m}(\cdot)$ as a function of the belief state with~$m$
	fixed. From~\eqref{eqn:lastStep}, the value function
	$V_{1,1,\beta,m}(\omega)=\max\{(1-\epsilon)\omega,m\}$ is
	monotonically increasing with nonnegative right derivative
	$1-\epsilon$ or $0$. Assume~\eqref{eqn:posDiff} is true for $T\ge1$,
	then for $T+1$ we have
	$V_{1,T+1,\beta,m}(\omega)=\max\{f_T(\omega),g_T(\omega)\}$ with
	\begin{align}
		\begin{array}{ll}
			f_{T}(\omega)\defeq(1-\epsilon)\omega+(1-\epsilon)\omega\beta V_{1,T,\beta,m}(p_{11})\\[4pt]
			\quad\quad\quad\quad+(1-(1-\epsilon)\omega)\beta V_{1,T,\beta,m}(\mathcal{T}\circ\phi(\omega)),\\[4pt]
			g_{T}(\omega)\defeq m+\beta
			V_{1,T,\beta,m}(\mathcal{T}(\omega)).
		\end{array}   \label{eqn:valueT}
	\end{align}
	From the above, we have
	\begin{gather}
		\begin{array}{ll}
			f_{T}'(\omega)=(1-\epsilon)+(1-\epsilon)\beta V_{1,T,\beta,m}(p_{11})
			-(1-\epsilon)\beta V_{1,T,\beta,m}(\mathcal{T}\circ\phi(\omega))\\[4pt]
			~~~~~~~~~~~~+V_{1,T,\beta,m}'(\mathcal{T}\circ\phi(\omega))\frac{\epsilon\beta(p_{11}-p_{01})}{1-(1-\epsilon)\omega},\\[4pt]
			g_{T}'(\omega)=\beta(p_{11}-p_{01})V_{1,T,\beta,m}'(\mathcal{T}(\omega)),
		\end{array}\label{eqn:diffT}
	\end{gather} where $f_T'(\cdot),\ g_T'(\cdot)$ and $V_{1,T,\beta,m}'(\cdot)$
	denote the right derivatives of the corresponding functions. We have
	used the fact that $\phi(\cdot)$ is monotonically increasing and when
	$p_{11}>p_{01}$, $\Tau(\cdot)$ is also monotonically increasing and
	that
	\begin{eqnarray}
		\phi'(\omega)=\frac{\epsilon}{(1-(1-\epsilon)\omega)^2}.
	\end{eqnarray}
	By the induction hypothesis and~\eqref{eqn:diffT}, if $p_{11}>p_{01}$ then $g_{T}(\omega)$ is monotonically increasing (since $g'_{T}(\omega)\ge0$) and
	\begin{eqnarray}
		\begin{aligned}
			f_{T}'(\omega)&=(1-\epsilon)+(1-\epsilon)\beta V_{1,T,\beta,m}(p_{11})-(1-\epsilon)\beta V_{1,T,\beta,m}(\mathcal{T}(\tfrac{\epsilon\omega}{\epsilon\omega+1-\omega})) \\
			&~~~+(1-(1-\epsilon)\omega)\beta V_{1,T,\beta,m}'(\mathcal{T}(\frac{\epsilon\omega}{\epsilon\omega+1-
				\omega}))(\mathcal{T}\circ\phi)'(\omega) \\
			&\geq (1-\epsilon)+\tfrac{\epsilon\beta(p_{11}-p_{01})}{1-(1-\epsilon)\omega}V_{1,T,\beta,m}'(\mathcal{T}(\tfrac{\epsilon\omega}{\epsilon\omega+1-\omega}))> 0,
		\end{aligned}
	\end{eqnarray}
	where both the first and second inequalities are due to the
	monotonically increasing property of $V_{1,T,\beta,m}(\cdot)$ under
	the assumption that $p_{11}>p_{01}$ by our induction hypothesis and
	\begin{eqnarray}
		p_{01}\le\Tau(\omega)\le p_{11},\ 0\le \phi(\omega)\le1,\quad\forall~\omega\in[0,1].
	\end{eqnarray}
	This proves the monotonically increasing property of $f_T(\omega)$. Thus $V_{1,T+1,\beta,m}(\omega)
	=\max\{f_T(\omega), g_T(\omega)\}$ is also monotonically increasing and the proof by induction is finished.
\end{proof}

Now we show that under a constraint on the discount factor $\beta\in(0,1)$, the value function $V_{1,T,\beta,m}(\omega)$ is a Lipschitz function:
\begin{lemma}\label{lm:LipCont}
	Suppose the discount factor $\beta\in(0,1)$ satisfies
	\begin{eqnarray}\label{eqn:betaCond}
		\beta <\tfrac{1}{(2-\epsilon)\vert p_{11}-p_{01}\vert}.
	\end{eqnarray}
	Then $\forall\ T \geq 1$ and $\forall\ \omega,\omega'\in\lbrack 0,1\rbrack$,
	{\begin{eqnarray}
			&\vert V_{1,T,\beta,m}(\omega)-V_{1,T,\beta,m}(\omega')\vert\leq C\vert\omega-\omega'\vert, \ \text{where}~\\
			&C=\frac{1-\epsilon}{1-(2-\epsilon)\beta\vert p_{11}-p_{01}\vert}.\label{eqn:C}
	\end{eqnarray}}
\end{lemma}
\begin{proof}
	We prove this by induction. Without loss of generality, assume $\omega<\omega'$. For the case of $T=1$,
	\begin{equation*}
		\vert V_{1,1,\beta,m}(\omega)-V_{1,1,\beta,m}(\omega')\vert=
		\begin{cases}
			0,& m\geq (1-\epsilon)\omega'\\
			(1-\epsilon)\omega'-m,&\text{if } (1-\epsilon)\omega\le m<(1-\epsilon)\omega'\\
			(1-\epsilon)\vert\omega-\omega'\vert,&\text{if } m<(1-\epsilon)\omega
		\end{cases}.
	\end{equation*}
	Thus $\vert V_{1,1,\beta,m}(\omega)-V_{1,1,\beta,m}(\omega')\vert\leq(1-\epsilon)\vert\omega-\omega'\vert\leq C\vert\omega-\omega'\vert$, where the second inequality is due to~\eqref{eqn:betaCond}.

	Assume that for $T\geq
	1,~\vert V_{1,T,\beta,m}(\omega)-V_{1,T,\beta,m}(\omega')\vert\leq
	C\vert\omega-\omega'\vert$ holds, \ie neither the left nor the right
	derivative of $V_{1,T,\beta,m}(\cdot)$ can have an absolute value exceeding~$C$. We have the following inequalities:
    	\begin{gather}
		\begin{aligned}
			\vert V_{1,T,\beta,m}(p_{11})-V_{1,T,\beta,m}(\mathcal{T}(\tfrac{\epsilon\omega}{\epsilon\omega+1-\omega}))\vert&\leq C\vert p_{11}-\mathcal{T}(\tfrac{\epsilon\omega}{\epsilon\omega+1-\omega})\vert\\
			&\leq C\vert p_{11}-p_{01}\vert, \\[4pt]
			\frac{\epsilon}{1-(1-\epsilon)\omega}\vert V_{1,T,\beta,m}'(\mathcal{T}(\tfrac{\epsilon\omega}{\epsilon\omega+1-\omega}))\vert&\leq \frac{\epsilon}{1-(1-\epsilon)\cdot1}C= C.
		\end{aligned}\label{eqn:boundsOnV}
	\end{gather}
	\normalsize

To prove
	$\vert V_{1,T+1,\beta,m}(\omega)-V_{1,T+1,\beta,m}(\omega')\vert\leq
	C\vert\omega-\omega'\vert$, recall the definitions of $f_T(\omega)$ and
	$g_T(\omega)$ in~\eqref{eqn:valueT} and their right derivatives
	$f_T'(\omega)$ and $g_T'(\omega)$ in~\eqref{eqn:diffT}. We have
	\begin{gather}
		\begin{aligned}
			|f_{T}'(\omega)-(1-\epsilon)|&\le(1-\epsilon)\beta C\vert p_{11}-p_{01}\vert+C\beta\vert p_{11}-p_{01}\vert=(2-\epsilon)\beta C\vert p_{11}-p_{01}\vert, \\
			|g_{T}'(\omega)| &\leq C\beta\vert p_{11}-p_{01}\vert,
		\end{aligned}
		\label{eqn:diffBound}
	\end{gather}
	\normalsize
	where the inequality in the first (or second) line above is due to the first (or second) line in~\eqref{eqn:boundsOnV}. Thus we have the following lower and upper bounds on $f'_T(\omega)$
	and $g'_T(\omega)$:	
	\begin{gather}
		\begin{aligned}
			(1-\epsilon)-(2-\epsilon)\beta C\vert p_{11}-p_{01}\vert\leq f_{T}'(\omega)&\leq (1-\epsilon)+(2-\epsilon)\beta C\vert p_{11}-p_{01}\vert, \\
			-C\beta\vert p_{11}-p_{01}\vert\leq g_{T}'(\omega) &\leq C\beta\vert p_{11}-p_{01}\vert.
		\end{aligned}
		\label{eqn:diffBound}
	\end{gather}
	\normalsize
From~\eqref{eqn:diffBound} and that~$V_{1,T+1,\beta,m}(\omega)
	=\max\{f_T(\omega), g_T(\omega)\}$, we have
	\[
	\begin{aligned}
		\vert V_{1,T+1,\beta,m}'(\omega)\vert&\leq(1-\epsilon)+(2-\epsilon)\beta C\vert p_{11}-p_{01}\vert\\
		&=(1-\epsilon)+(2-\epsilon)\beta\vert p_{11} -p_{01}\vert\tfrac{1-\epsilon}{1-(2-\epsilon)\beta\vert p_{11}-p_{01}\vert} \\
		&=\tfrac{1-\epsilon}{1-(2-\epsilon)\beta\vert p_{11}-p_{01}\vert}=C,
	\end{aligned}
	\]
where we used the fact that $2-\epsilon\ge1$ in the above inequality.
	Since $V_{1,T+1,\beta,m}(\omega)$ is absolutely continuous, the above implies that
	\[
	\vert V_{1,T+1,\beta,m}(\omega)-V_{1,T+1,\beta,m}(\omega')\vert\leq C\vert\omega-\omega'\vert.
	\]
The proof is thus finished by the induction process.
\end{proof}
Last, we give a lemma establishing the order of
$V'_{1,T,\beta,m}(\,\cdot\,;u=1)$ and
\\$V'_{1,T,\beta,m}(\,\cdot\,;u=0)$ under certain conditions which
further leads to a threshold structure of the optimal single-arm
policy as detailed in Section~\ref{sec:thIdx}.
\begin{lemma}\label{lm:orderDiff}
	Suppose that $p_{11}>p_{01}$ and $\beta\leq 1/[{(3-\epsilon)(p_{11}-p_{01})}]$, we have
	\begin{eqnarray}
		V_{1,T,\beta,m}'(\omega;u=1)\geq V_{1,T,\beta,m}'(\omega;u=0), \label{eqn:diffOrder}
	\end{eqnarray}
	where $V_{1,T,\beta,m}'(\omega;u=i)$ denotes the right derivative of
	$V_{1,T,\beta,m}(\,\cdot\,;u=i)$ at $\omega$ for $i\in\{0,1\}$. The above
	inequality is also true if $p_{01}>p_{11}$ and $\beta\leq 1/[{(5-2\epsilon)(p_{01}-p_{11})}]$.
\end{lemma}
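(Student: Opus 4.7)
The plan is to prove both cases simultaneously by induction on $T$, combining the derivative formulas already derived in the proof of Lemma~\ref{lm:posVd} with the Lipschitz bound $|V'_{1,T,\beta,m}(\cdot)|\le C$ from Lemma~\ref{lm:LipCont}. Note first that each hypothesized bound on $\beta$ here is strictly stronger than the one required for Lemma~\ref{lm:LipCont}, so $C$ from \eqref{eqn:C} is well-defined. The base case $T=1$ is immediate from~\eqref{eqn:lastStep}: $V'_{1,1,\beta,m}(\omega;u=1)=1-\epsilon\ge 0=V'_{1,1,\beta,m}(\omega;u=0)$.

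For the inductive step I would differentiate \eqref{eqn:finite1}--\eqref{eqn:finite0} (as already done in \eqref{eqn:diffT} up to an index shift) to obtain
\begin{align*}
V'_{1,T+1,\beta,m}(\omega;u=1) &= (1-\epsilon) + (1-\epsilon)\beta\bigl[V_{1,T,\beta,m}(p_{11}) - V_{1,T,\beta,m}(\Tau\circ\phi(\omega))\bigr] \\
&\quad + \frac{\epsilon\beta(p_{11}-p_{01})}{1-(1-\epsilon)\omega}\,V'_{1,T,\beta,m}(\Tau\circ\phi(\omega)),\\
V'_{1,T+1,\beta,m}(\omega;u=0) &= \beta(p_{11}-p_{01})\,V'_{1,T,\beta,m}(\Tau(\omega)),
\end{align*}
then lower-bound the first and upper-bound the second. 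In the case $p_{11}>p_{01}$, since $\Tau\circ\phi(\omega)\in[p_{01},p_{11}]$, Lemma~\ref{lm:posVd} gives both $V_{1,T,\beta,m}(p_{11})\ge V_{1,T,\beta,m}(\Tau\circ\phi(\omega))$ and $V'_{1,T,\beta,m}(\Tau\circ\phi(\omega))\ge 0$, so the two corrections to $(1-\epsilon)$ are nonnegative and $V'_{1,T+1,\beta,m}(\omega;u=1)\ge 1-\epsilon$. Lemma~\ref{lm:LipCont} simultaneously yields $V'_{1,T+1,\beta,m}(\omega;u=0)\le \beta(p_{11}-p_{01})C$. After substituting $C$ from \eqref{eqn:C}, the target inequality reduces to $1-(2-\epsilon)\beta(p_{11}-p_{01})\ge \beta(p_{11}-p_{01})$, i.e.\ exactly $\beta\le 1/[(3-\epsilon)(p_{11}-p_{01})]$.

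In the case $p_{01}>p_{11}$, monotonicity of $V_{1,T,\beta,m}$ is unavailable, so I would absorb every sign-ambiguous term into its absolute-value bound, using $|p_{11}-\Tau\circ\phi(\omega)|\le p_{01}-p_{11}$, $|V'_{1,T,\beta,m}(\cdot)|\le C$, and $\epsilon/(1-(1-\epsilon)\omega)\le 1$. This yields
\[
V'_{1,T+1,\beta,m}(\omega;u=1)\ge (1-\epsilon) - (2-\epsilon)\beta(p_{01}-p_{11})C,\quad |V'_{1,T+1,\beta,m}(\omega;u=0)|\le \beta(p_{01}-p_{11})C,
\]
and the target inequality reduces to $(1-\epsilon)\ge (3-\epsilon)\beta(p_{01}-p_{11})C$, equivalently $\beta\le 1/[(5-2\epsilon)(p_{01}-p_{11})]$.

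The main obstacle is exactly this second case: without Lemma~\ref{lm:posVd}, the Lipschitz constant $C$ must be \emph{spent twice}---once on the value difference and once on $V'_{1,T,\beta,m}(\Tau\circ\phi(\omega))$---rather than simply discarding them as nonnegative contributions. This doubles the subtracted penalty on the lower bound for $V'_{1,T+1,\beta,m}(\omega;u=1)$ and is precisely what turns the threshold coefficient $(3-\epsilon)$ into $(5-2\epsilon)$. Some minor care is also needed where $V_{1,T,\beta,m}$ is not differentiable, but since it is piecewise linear the right derivatives exist everywhere, the chain-rule identities above remain valid in the right-derivative sense, and the magnitude bound $|V'_{1,T,\beta,m}|\le C$ from Lemma~\ref{lm:LipCont} applies to both one-sided derivatives uniformly.
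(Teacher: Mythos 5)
Your proposal is correct and follows essentially the same route as the paper: both prove the base case from \eqref{eqn:lastStep} and then compare the derivative bounds of \eqref{eqn:diffBound} — the two-sided $\pm(2-\epsilon)\beta C|p_{11}-p_{01}|$ bounds when $p_{01}>p_{11}$, and the tighter one-sided bounds $f'_T\ge 1-\epsilon$, $0\le g'_T\le C\beta(p_{11}-p_{01})$ obtained from \Cref{lm:posVd} when $p_{11}>p_{01}$ — with the same algebraic reduction to $(3-\epsilon)\beta|p_{11}-p_{01}|\le 1$ and $(5-2\epsilon)\beta|p_{01}-p_{11}|\le 1$ respectively. Your closing remark about one-sided derivatives matches the paper's own caveat (in the negatively correlated case the right derivative of $g_T$ actually picks up the \emph{left} derivative of $V_{1,T,\beta,m}$ at $\Tau(\omega)$ because $\Tau$ is decreasing, but since the Lipschitz bound $C$ controls both one-sided derivatives, the estimate is unaffected), and your observation that the $C$-bound must be ``spent twice'' absent monotonicity correctly identifies why $(3-\epsilon)$ becomes $(5-2\epsilon)$.
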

\begin{proof}
	Again, we prove by induction on the time horizon $T$. When~$T=1$, it is clear that $V_{1,1,\beta,m}(\omega;u=1)=(1-\epsilon)\omega$ and $V_{1,1,\beta,m}(\omega;u=0)=m$:
	\begin{eqnarray}
		V_{1,1,\beta,m}'(\omega;u=1)=1-\epsilon>V_{1,1,\beta,m}'(\omega;u=0)=0.
	\end{eqnarray}
	Assume that $V_{1,T,\beta,m}'(\omega;u=1)\ge
	V_{1,T,\beta,m}'(\omega;u=0)$ for~$T\ge1$. From \eqref{eqn:diffBound},
	we have, in case of $p_{01}>p_{11}$ and $\beta\leq \frac{1}{(5-2\epsilon)(p_{01}-p_{11})}$, the inequality $C\beta(p_{01}-p_{11})\leq (1-\epsilon)-(2-\epsilon)\beta C(p_{01}-p_{11}),$
	which shows that $f_{T}'(\omega)\ge g_{T}'(\omega)$. When $p_{11}>p_{01}$, $V_{1,T,\beta,m}(\omega)$ is increasing with $\omega$ therefore has nonnegative right derivatives by Lemma~\ref{lm:posVd}. We can thus obtain tighter bounds on $f_{T}'(\omega)$ and $g_{T}'(\omega)$ by~\eqref{eqn:diffT}:
	\[
	\begin{aligned}
		(1-\epsilon)\leq f_{T}'(\omega)&\leq(1-\epsilon)+(2-\epsilon)\beta C(p_{11}-p_{01}), \\
		0\leq g_{T}'(\omega) &\leq C\beta(p_{11}-p_{01}).
	\end{aligned}
	\]
	If $\beta\leq\frac{1}{(3-\epsilon)(p_{11}-p_{01})}$, we have $C\beta(p_{11}-p_{01})\leq(1-\epsilon)$, which shows that $f_{T}'(\omega)\geq g_{T}'(\omega)$. The proof is thus complete.
\end{proof}

\subsection{Threshold Policy and Indexability}
\label{sec:thIdx}
In this section, we show that the optimal single-arm policy is a threshold policy under the constraints on the discount factor $\beta$ specified in Section~\ref{sec:singleArm} and analyze the conditions for indexability. First, for a finite-horizon single-armed bandit, a threshold policy $\pi$ is defined by a time-dependent real number $\omega_{T,\beta}(m)$ such that
\begin{equation}
	u_{T,m}(\omega)=\begin{cases}
		1, &\text{if } \omega>\omega_{T,\beta}(m);\\
		0, &\text{if } \omega\leq\omega_{T,\beta}(m).
	\end{cases}\label{eqn:thT}
\end{equation}
In the above
$u_{T,m}(\omega)\in\{0,1\}$ is the action taken under
$\pi$ at the current state $\omega$ with
$T$ slots remaining. Intuitively, the larger
$\omega$ is, the larger expected immediate reward to accrue and thus
more attractive to activate the arm. We formalize this intuition under
certain conditions in the following theorem.
\begin{theorem}\label{thm:thresholdT}
	Suppose that $p_{11}>p_{01}$ and $\beta\leq\frac{1}{(3-\epsilon)(p_{11}-p_{01})}$. For any $T\geq 1$, the optimal single-arm policy $\pi^*$ is a threshold policy, \ie there exists $\omega_{T,\beta}^{*}(m)\in\mathbbm{R}$ such that under $\pi^*$, the optimal action is
	\begin{equation*}
		u_{T,m}^{*}(\omega)=\begin{cases}
			1, &\text{if } \omega>\omega_{T,\beta}^{*}(m);\\
			0, &\text{if } \omega\leq\omega_{T,\beta}^{*}(m).
		\end{cases}
	\end{equation*}
	Furthermore, at the threshold $\omega_{T,\beta}^{*}(m)$,
	\begin{eqnarray}
		V_{1,T,\beta,m}(\omega_{T,\beta}^{*}(m);u=0)=V_{1,T,\beta,m}(\omega_{T,\beta}^{*}(m);u=1).
	\end{eqnarray}
	The conclusion is also true for the case of $p_{01}>p_{11}$ and $\beta\leq\frac{1}{(5-2\epsilon)(p_{01}-p_{11})}$.
\end{theorem}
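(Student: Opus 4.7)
My plan is to reduce Theorem~\ref{thm:thresholdT} to Lemma~\ref{lm:orderDiff} by studying the gap function
\[
D_T(\omega) := V_{1,T,\beta,m}(\omega;u=1) - V_{1,T,\beta,m}(\omega;u=0).
\]
If I can show that $D_T$ is monotonically nondecreasing on $[0,1]$, the threshold form falls out by taking the threshold to be the supremum of the sublevel set $\{\omega\in[0,1]:D_T(\omega)\leq 0\}$, and the equality of value functions at the threshold comes from continuity of $D_T$.

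For the monotonicity step, recall from the discussion preceding Lemma~\ref{lm:posVd} that both $V_{1,T,\beta,m}(\cdot;u=1)$ and $V_{1,T,\beta,m}(\cdot;u=0)$ are piecewise linear and continuous on $[0,1]$ (the key is the Bayes' cancellation of the denominator $1-(1-\epsilon)\omega$ inside the $u=1$ recursion), so $D_T$ is piecewise linear and continuous as well, with one-sided derivatives at every point. Lemma~\ref{lm:orderDiff} supplies precisely the inequality $V_{1,T,\beta,m}'(\omega;u=1)\geq V_{1,T,\beta,m}'(\omega;u=0)$ for right derivatives under either of the two discount-factor regimes appearing in the theorem, so $D_T'(\omega)\geq 0$ at every $\omega\in(0,1)$. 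A continuous piecewise-linear function with nonnegative right derivative on each linear piece is nondecreasing, which gives the desired monotonicity of $D_T$.

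I then define
\[
\omega_{T,\beta}^{*}(m) := \sup\bigl\{\omega\in[0,1]:V_{1,T,\beta,m}(\omega;u=0)\geq V_{1,T,\beta,m}(\omega;u=1)\bigr\},
\]
with the convention that the supremum is assigned a value strictly less than $0$ when the set is empty (so $u=1$ is optimal throughout $[0,1]$) and a value at least $1$ when the set is all of $[0,1]$ (so $u=0$ is optimal throughout). These extensions are allowed because the theorem only requires $\omega_{T,\beta}^{*}(m)\in\mathbb{R}$. By the monotonicity of $D_T$, $D_T(\omega)>0$ for every $\omega>\omega_{T,\beta}^{*}(m)$ in $[0,1]$ and $D_T(\omega)\leq 0$ for every $\omega\leq\omega_{T,\beta}^{*}(m)$ in $[0,1]$, which is exactly the claimed threshold form of $u_{T,m}^{*}(\omega)$; when $\omega_{T,\beta}^{*}(m)\in[0,1]$, continuity of $D_T$ forces $D_T(\omega_{T,\beta}^{*}(m))=0$, yielding the equality $V_{1,T,\beta,m}(\omega_{T,\beta}^{*}(m);u=0)=V_{1,T,\beta,m}(\omega_{T,\beta}^{*}(m);u=1)$.

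The main obstacle has in fact already been resolved inside Lemma~\ref{lm:orderDiff}; that is where the discount-factor bounds $\beta\leq\tfrac{1}{(3-\epsilon)(p_{11}-p_{01})}$ and $\beta\leq\tfrac{1}{(5-2\epsilon)(p_{01}-p_{11})}$ are doing the real work. Given that lemma, the present proof is bookkeeping, and the reverse-correlation case $p_{01}>p_{11}$ requires no separate argument because Lemma~\ref{lm:orderDiff} already packages both branches of hypotheses under the same conclusion.
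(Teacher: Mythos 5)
Your proposal is correct and is essentially the paper's own argument: both reduce the theorem to Lemma~\ref{lm:orderDiff}, conclude that the gap $h_T(\omega)=V_{1,T,\beta,m}(\omega;u=1)-V_{1,T,\beta,m}(\omega;u=0)$ is continuous and nondecreasing, and then locate the threshold --- the paper by explicit case analysis on $m<0$, $0\le m<1-\epsilon$, $m\ge 1-\epsilon$ (using $h_T(0)=-m$ and $h_T(1)=(1-\epsilon)-m$), you by taking the supremum of the sublevel set $\{\omega\in[0,1]: h_T(\omega)\le 0\}$. One small fix to your conventions: when that set is all of $[0,1]$ you must place $\omega_{T,\beta}^{*}(m)$ \emph{strictly} above $1$ (as the paper does with $c>1$), since your phrase ``a value at least $1$'' permits $\omega_{T,\beta}^{*}(m)=1$, in which case your continuity argument would assert $h_T(1)=0$, which fails for $m>1-\epsilon$.
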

\begin{proof}
	At $T=1$, $V_{1,1,\beta,m}(\omega;u=1)=(1-\epsilon)\omega,V_{1,1,\beta,m}(\omega;u=0)=m$. Thus we can choose $\omega_{1,\beta}^{*}(m)$ as follows:
	\begin{equation*}
		\omega_{1,\beta}^{*}(m)=\begin{cases}
			c, &\text{if } m\geq 1-\epsilon;\\
			\frac{m}{1-\epsilon}, &\text{if } 0\leq m<1-\epsilon; \\
			b, &\text{if } m<0,
		\end{cases}
	\end{equation*}
	where $b<0,c>1$ are arbitrary constants.
	
	For $T\geq 1$, when the condition on $\beta$ is satisfied,
	Lemma~\ref{lm:orderDiff} shows that
	\begin{eqnarray}
		h_T(\omega)&\defeq V_{1,T,\beta,m}(\omega;u=1)-V_{1,T,\beta,m}(\omega;u=0),\label{eqn:vDiff}
	\end{eqnarray}
	\begin{eqnarray}
		h'_T(\omega)&\geq 0,\quad\forall~\omega\in(0,1).\label{eqn:orderAct}
	\end{eqnarray}
	This shows that $h_T(\cdot)$ is monotonically increasing and either
	has no zeros in the interval $[0,1]$ or intersects with it over a
	closed interval (which can be a single point) only. Specially,
	\[
	\begin{aligned}
		V_{1,T,\beta,m}(0;u=1)&=\beta V_{1,T-1,\beta,m}(p_{01}), \\
		V_{1,T,\beta,m}(0;u=0)&=m+\beta V_{1,T-1,\beta,m}(p_{01}), \\
		V_{1,T,\beta,m}(1;u=1)&=(1-\epsilon)+\beta V_{1,T-1,\beta,m}(p_{11}), \\
		V_{1,T,\beta,m}(1;u=0)&=m+\beta V_{1,T-1,\beta,m}(p_{11}).
	\end{aligned}
	\]
	Consider the following three regions of $m$.
	\begin{itemize}
		\item[(i)] $0\leq m<1-\epsilon$. In this case, $V_{1,T,\beta,m}(0;u=1)\leq V_{1,T,\beta,m}(0;u=0)$ and $V_{1,T,\beta,m}(1;u=1)>V_{1,T,\beta,m}(1;u=0)$. Therefore $h_T(\cdot)$ intersects over (at least) one point in $[0,1]$. This point can thus be chosen as $\omega_{T,\beta}^{*}(m)$.
		\item[(ii)]$m<0$. In this case, $V_{1,T,\beta,m}(0;u=1)> V_{1,T,\beta,m}(0;u=0)$ and $V_{1,T,\beta,m}(1;u=1)>V_{1,T,\beta,m}(1;u=0)$. So $h_T(\cdot)$ is strictly positive over $[0,1]$ and we can choose $\omega_{T,\beta}^{*}(m)=b$ with any $b<0$.
		\item[(iii)] $m\geq(1-\epsilon)$. In this case, always choosing the
		passive action is clearly optimal as the expected immediate reward
		is uniformly upper-bounded by $m$ over the whole belief state
		space. We can thus choose $\omega_{T,\beta}^{*}(m)=c$ with any
		$c>1$.
	\end{itemize}
	In conclusion, when the conditions in the theorem are satisfied, the optimal finite-horizon single-arm policy is a threshold policy for any horizon length $T\ge1$.
\end{proof}
In the next theorem, we show that the optimal single-arm policy over
the {\em infinite} horizon is also a threshold policy under the same
conditions.
\begin{theorem}\label{thm:threshold}
	Fix the subsidy $m$. The finite-horizon value functions
	$V_{1,T,\beta,m}(\cdot)$,
	$V_{1,T,\beta,m}(\,\cdot\,;u=1)\text{~and~} V_{1,T,\beta,m}(\,\cdot\,;u=0)$
	uniformly converge to the infinite-horizon value functions
	$V_{\beta,m}(\cdot)$,
	$V_{\beta,m}(\,\cdot\,;u=1)\text{~and~} V_{\beta,m}(\,\cdot\,;u=0)$ which
	are consequently obedient to the same properties established in
	Lemmas \ref{lm:posVd} and~\ref{lm:LipCont} and
	Theorem~\ref{thm:thresholdT}.
\end{theorem}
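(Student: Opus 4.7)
The plan is to separate the argument into uniform convergence and property inheritance. First I would recast the finite-horizon recursion as value iteration for a Bellman operator. Define $\mathcal{B}$ acting on bounded real-valued functions $V:[0,1]\to\mathbb{R}$ by
\[
(\mathcal{B}V)(\omega)=\max\Bigl\{(1-\epsilon)\omega+\beta\bigl[(1-\epsilon)\omega\,V(p_{11})+(1-(1-\epsilon)\omega)V(\mathcal{T}\circ\phi(\omega))\bigr],\ m+\beta V(\mathcal{T}(\omega))\Bigr\}.
\]
Both branches are affine in $V$ with the $V$-coefficients summing to $\beta$ (since $(1-\epsilon)\omega+(1-(1-\epsilon)\omega)=1$), and the pointwise $\max$ is nonexpansive, so $\mathcal{B}$ is a $\beta$-contraction in the sup norm $\|\cdot\|_\infty$. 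By the Banach fixed-point theorem, $\mathcal{B}$ has a unique bounded fixed point, which must coincide with $V_{\beta,m}$. Iterating from $V_{1,0,\beta,m}\equiv 0$ gives $V_{1,T,\beta,m}=\mathcal{B}^T 0$, and bounding the per-stage reward by $R_{\max}:=\max(1-\epsilon,|m|)$ yields
\[
\|V_{1,T,\beta,m}-V_{\beta,m}\|_\infty\leq \frac{\beta^T R_{\max}}{1-\beta}\longrightarrow 0.
\]

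Next I would deduce uniform convergence of the branch value functions. Since $V_{\beta,m}(\omega;u=i)$ is obtained from $V_{\beta,m}$ by one application of the corresponding branch of $\mathcal{B}$, and $V_{1,T,\beta,m}(\omega;u=i)$ is the same branch applied to $V_{1,T-1,\beta,m}$ (cf.~\eqref{eqn:finite1}--\eqref{eqn:finite0}), the $\beta$-Lipschitz dependence of each branch on its $V$-argument yields
\[
\|V_{1,T,\beta,m}(\cdot;u=i)-V_{\beta,m}(\cdot;u=i)\|_\infty\leq\beta\,\|V_{1,T-1,\beta,m}-V_{\beta,m}\|_\infty,\quad i\in\{0,1\},
\]
which tends to zero.

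For the property transfer, I would invoke the standard facts that uniform (or even pointwise) limits preserve monotonicity and preserve the $C$-Lipschitz property with the same constant $C$. Since the Lipschitz constant in \Cref{lm:LipCont} is $T$-independent, this immediately gives both \Cref{lm:posVd} and \Cref{lm:LipCont} for $V_{\beta,m}$. For the threshold structure of \Cref{thm:thresholdT}, the key observation is that its proof uses only that $h_T(\omega):=V_{1,T,\beta,m}(\omega;u=1)-V_{1,T,\beta,m}(\omega;u=0)$ is monotonically increasing in $\omega$, together with the explicit boundary values at $\omega=0$ and $\omega=1$. Since the two summands of $h_T$ converge uniformly, so does $h_T$ to $h:=V_{\beta,m}(\cdot;u=1)-V_{\beta,m}(\cdot;u=0)$, and monotonicity is preserved in the limit. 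The same three-case split over $m<0$, $0\leq m<1-\epsilon$, $m\geq 1-\epsilon$ used in the proof of \Cref{thm:thresholdT} then selects an infinite-horizon threshold $\omega_{\beta}^{*}(m)$.

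The main obstacle I anticipate is cosmetic rather than substantive: \Cref{lm:orderDiff} is stated in terms of right derivatives, which are not generally preserved under uniform limits. I would circumvent this by using the stronger integrated consequence---namely that $h_T$ itself is monotonically increasing on $[0,1]$---which passes to the limit without any differentiability assumption. Once this reformulation is in place, every remaining step is a routine application of the preservation of monotonicity and of the $C$-Lipschitz property under uniform convergence.
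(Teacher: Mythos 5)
Your proof is correct, and it reaches the paper's conclusion by a partly different route in its first half. For the uniform convergence, the paper does not invoke the Bellman-operator contraction: it compares policies directly---applying a $T$-horizon optimal policy for the first $T$ slots followed by an infinite-horizon optimal policy gives $V_{1,T,\beta,m}(\omega)+\beta^{T}\mathbb{E}[V_{\beta,m}(\omega(T+1))]\le V_{\beta,m}(\omega)$, while truncating the infinite-horizon optimal policy gives the reverse estimate, whence $0\le V_{\beta,m}(\omega)-V_{1,T,\beta,m}(\omega)\le\beta^{T}\max\{1,m\}/(1-\beta)$. Your Banach fixed-point argument is equally rigorous and somewhat more modular: it delivers the same geometric rate and, as a bonus, uniqueness of the bounded solution of \eqref{eq: long time value function}, at the cost of the (standard) verification that the optimal value function coincides with that fixed point---a step the paper sidesteps because it takes the dynamic equation as already established; in exchange, the paper's sandwich argument is more elementary and yields the extra information that $V_{1,T,\beta,m}\le V_{\beta,m}$, i.e., convergence from below. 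Your one-step $\beta$-Lipschitz treatment of the branch functions $V_{1,T,\beta,m}(\cdot;u=i)$ is a cleaner rendering of the paper's ``can be shown similarly.'' Finally, your caution about right derivatives is well placed and matches the paper's actual logic: the paper likewise transfers to the limit not the derivative inequality of Lemma~\ref{lm:orderDiff} itself but its integrated consequence, namely that $h_T$ is monotonically increasing, so that $h=\lim_{T\to\infty}h_T$ is monotonically increasing and the three-case split on $m$ from Theorem~\ref{thm:thresholdT} produces the infinite-horizon threshold. I see no gaps in your argument.
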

\begin{proof}
The uniform convergence is obvious since $\beta<1$ and the rest can be easily proved by contradiction following the uniform convergence.
\end{proof}
Thus far we have established the threshold structure of the optimal
single-arm policy with subsidy based on the analysis of
$V_{\beta,m}(\omega)$ as a function of the belief state
$\omega$ with
$m$ fixed. To study the indexability condition, we now analyze the
properties of $V_{\beta,m}(\omega)$ as a function of the subsidy
$m$ with the starting belief
$\omega$ fixed. From Definition~\ref{def:indexability} and the
threshold structure of the optimal policy, the indexability of our
model is reduced to requiring that the threshold
$\omega^*_{\beta}(m)$ is monotonically increasing with
$m$ (if the threshold is a closed interval then the right end is
selected). Note that for the infinite-horizon problem, the threshold
$\omega^*_{\beta}(m)$ is independent of time. Furthermore,
$V_{\beta,m}(\omega)$ is also convex in
$m$ as for any $m_1,m_2\in\mathbbm{R}$ and
$\theta\in(0,1)$ the optimal policy $\pi_\beta^*(\theta
m_1+(1-\theta)m_2)$ achieving $V_{\beta,\theta
	m_1+(1-\theta)m_2}(\omega)$ applied respectively on the problem
with subsides $m_1$ and
$m_2$ cannot outperform those achieving
$V_{\beta,m_1}(\omega)$ and
$V_{\beta,m_2}(\omega)$. Specifically, let
$r_a$ be the expected total discounted reward from the active action
and $r_p(m)$ that from the passive action under $\pi_\beta^*(\theta
m_1+(1-\theta)m_2)$ applied to the problem with subsidy $m$, then
\begin{eqnarray}\nn
	\theta V_{\beta,m_1}(\omega) + (1-\theta)V_{\beta,m_2}(\omega)&\ge& r_a+\theta r_p(m_1)+(1-\theta)r_p(m_2)\\ \nn
	&=& r_a+r_p(\theta m_1+(1-\theta)m_2)\\\nn
	&=&V_{\beta,\theta m_1+(1-\theta)m_2}(\omega).
\end{eqnarray}
Since $V_{\beta,m}(\omega)$ is convex in $m$, its left and right derivatives with $m$ exist at every point $m_0\in\mathbbm{R}$. Furthermore, consider two policies $\pi_\beta^*(m_1)$ and $\pi_\beta^*(m_2)$ achieving $V_{\beta,m_1}(\omega)$ and $V_{\beta,m_2}(\omega)$ for any $m_1,m_2\in\mathbbm{R}$, respectively. With a similar interchange argument of $\pi_\beta^*(m_1)$ and $\pi_\beta^*(m_2)$ as above, we have $\vert V_{\beta,m_1}(\omega)-V_{\beta,m_2}(\omega)\vert\le \frac{1}{1-\beta}\vert m_1-m_2\vert$ and $V_{\beta,m}(\omega)$ is Lipschitz continuous in $m$. By the Rademacher theorem (see \citealp{H2005}), $V_{\beta,m}(\omega)$ is differentiable almost everywhere in $m$. For a small increase of $m$, the rate at which $V_{\beta,m}(\omega)$ increases is at least the expected total discounted passive time under any optimal policy for the problem with subsidy $m$ starting from the belief state $\omega$. In the following theorem, we formalize this relation between the value function and the passive time as well as a sufficient condition for the indexability of our model.
\begin{theorem}\label{thm:indexability}
	Let~$\Pi^*_{\beta}(m)$ denote the set of all optimal single-arm policies achieving~$V_{\beta,m}(\omega)$ with initial belief state~$\omega$. Define the passive time
	\begin{eqnarray}
		D_{\beta,m}(\omega)\defeq
		\max_{\pi^*_{\beta}(m)\in\Pi^*_{\beta}(m)}\mathbbm{E}_{\pi^*_{\beta}(m)}\left[\sum_{t=1}^{\infty}\beta^{t-1}\mathbbm{1}_{(u(t)=0)}\Big\vert
		\omega(1)=\omega\right].\label{def:passiveTime}
	\end{eqnarray}
	The right derivative of the value function~$V_{\beta,m}(\omega)$ with~$m$, denoted by~$\frac{dV_{\beta,m}(\omega)}{(dm)^+}$, exists at every value of~$m$ and
	\begin{eqnarray}
		\left.\frac{dV_{\beta,m}(\omega)}{(dm)^+}\right\vert_{m=m_0}=D_{\beta,m_0}(\omega).\label{eq:rightDiff}
	\end{eqnarray}
	Furthermore, the single-armed bandit is indexable if at least one of the following condition is satisfied:
	\begin{itemize}
		\item[i.] for any $m_0\in[0,1-\epsilon)$ the optimal policy is a threshold policy with threshold $\omega^*_{\beta}(m_0)\in[0,1)$ (if the threshold is a closed interval then the right end is selected) and
		\begin{eqnarray}
			\left.\frac{dV_{\beta,m}(\omega^*_{\beta}(m_0);u=0)}{(dm)^+}\right\vert_{m=m_0} > \left.\frac{dV_{\beta,m}(\omega^*_{\beta}(m_0);u=1)}{(dm)^+}\right\vert_{m=m_0}.\label{eq:diffIdx}
		\end{eqnarray}
		\item[ii.] for any $m_0\in\mathbbm{R}$ and $\omega\in P(m_0)$, we have
		\begin{eqnarray}
			\left.\frac{dV_{\beta,m}(\omega;u=0)}{(dm)^+}\right\vert_{m=m_0} \ge \left.\frac{dV_{\beta,m}(\omega;u=1)}{(dm)^+}\right\vert_{m=m_0}.\label{eq:diffIdx1}
		\end{eqnarray}
	\end{itemize}
\end{theorem}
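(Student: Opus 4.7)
The plan is to establish \eqref{eq:rightDiff} first and then derive each indexability criterion from it combined with the convexity of $V_{\beta,m}(\omega)$ in $m$. For the derivative identity, I would exploit the representation $V_{\beta,m}(\omega) = \sup_{\pi}\{r_a(\pi,\omega) + m\cdot d(\pi,\omega)\}$, where $r_a(\pi,\omega)$ is the expected discounted active reward (independent of $m$) and $d(\pi,\omega) := \mathbb{E}_\pi[\sum_{t\ge 1}\beta^{t-1}\mathbbm{1}(u(t)=0)\mid\omega(1)=\omega]$ is the expected discounted passive time. As a pointwise supremum of affine functions of $m$, $V_{\beta,m}(\omega)$ is convex, so its right derivative exists everywhere. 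For the lower bound on this right derivative at $m_0$, apply any $\pi^*\in\Pi^*_\beta(m_0)$ to the perturbed subsidy $m_0+\Delta m$, obtaining $V_{\beta,m_0+\Delta m}(\omega)\ge V_{\beta,m_0}(\omega) + \Delta m\cdot d(\pi^*,\omega)$, and maximize over $\pi^*$. For the matching upper bound, take $\pi_{\Delta m}\in\Pi^*_{\beta}(m_0+\Delta m)$, apply it back to the subsidy $m_0$ problem to get $V_{\beta,m_0+\Delta m}(\omega) - V_{\beta,m_0}(\omega) \le \Delta m\cdot d(\pi_{\Delta m},\omega)\le \Delta m\cdot D_{\beta,m_0+\Delta m}(\omega)$, then pass $\Delta m\to 0^+$ using upper semicontinuity of the optimal-passive-time correspondence in $m$ (a Berge-type argument on the compact stationary-policy space).

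For indexability under condition (i), use the threshold structure of Theorem~\ref{thm:threshold}. The regimes $m_0<0$ (where $P(m_0)=\emptyset$) and $m_0\ge 1-\epsilon$ (where $P(m_0)=[0,1]$) are trivial. For $m_0\in[0,1-\epsilon)$, by definition of the threshold we have $\Delta(m_0) := V_{\beta,m_0}(\omega^*_\beta(m_0);u=0) - V_{\beta,m_0}(\omega^*_\beta(m_0);u=1) = 0$, with $\omega = \omega^*_\beta(m_0)$ held fixed. The strict derivative inequality \eqref{eq:diffIdx} then yields $\Delta(m) > 0$ for $m$ slightly greater than $m_0$, i.e., $\omega^*_\beta(m_0)\in P(m)$. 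Since the passive set is an initial segment $[0,\omega^*_\beta(m)]$ under threshold policies, this gives $\omega^*_\beta(m)\ge\omega^*_\beta(m_0)$ locally, and iterating this local monotonicity across $m_0\in[0,1-\epsilon)$ (every threshold in the relevant range satisfies the same hypothesis) yields global monotonic growth of the passive set.

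For condition (ii), fix $\omega\in P(m_0)$ and show $\Delta(m):=V_{\beta,m}(\omega;u=0)-V_{\beta,m}(\omega;u=1)\ge 0$ for all $m\ge m_0$. Let $E := \{M\ge m_0 : \Delta(m)\ge 0~\forall m\in[m_0,M]\}$, which is closed by continuity in $m$. On $E$, $\omega\in P(m)$, so \eqref{eq:diffIdx1} together with absolute continuity of each $V_{\beta,m}(\omega;u=i)$ in $m$ (inherited from Lipschitz continuity in $m$ established just before the theorem) lets us integrate the right-derivative inequality to conclude that $\Delta$ is nondecreasing on $E$. Assuming toward contradiction that $m^*:=\sup E<\infty$, we obtain $\Delta(m^*)\ge\Delta(m_0)\ge 0$; then \eqref{eq:diffIdx1} at $m^*$, combined with the fact that each right derivative $(d/dm^+)V_{\beta,m}(\omega;u=i)$ is itself nondecreasing in $m$ by convexity, should extend $\Delta\ge 0$ to a right-neighborhood of $m^*$ and contradict the maximality. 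The main obstacle here is the boundary case $\Delta(m^*)=0$, where the weak inequality in \eqref{eq:diffIdx1} alone does not directly preclude a higher-order dip of $\Delta$ below zero just after $m^*$; a careful argument leveraging the monotonicity of both one-sided derivatives in $m$ is required. The second technical hurdle is the Berge-type upper semicontinuity used to justify \eqref{eq:rightDiff}; making this precise demands equipping the stationary-policy space with a topology (e.g., pointwise convergence of the induced belief-action occupation measures) under which $d(\cdot,\omega)$ is upper semicontinuous on the optimal-policy set.
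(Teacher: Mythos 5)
Your overall route coincides with the paper's, which is terser at every step. For \eqref{eq:rightDiff} the paper gives no argument at all, simply invoking Theorem 1 of \cite{Liu21OR}; your sup-of-affine representation $V_{\beta,m}(\omega)=\sup_\pi\{r_a(\pi,\omega)+m\,d(\pi,\omega)\}$, the two interchange bounds, and the limit $\Delta m\to 0^+$ are precisely the standard reconstruction of that cited argument, and the upper-semicontinuity step you flag is the real content of it (it is made tractable here because, under the threshold structure, the relevant optimizers form a one-parameter family of threshold policies, so compactness is not an issue). Your treatment of condition (i) is the paper's argument almost verbatim: at the threshold $\omega^*_\beta(m_0)$ the two action values are equal, the strict inequality \eqref{eq:diffIdx} pushes $V_{\beta,m}(\omega^*_\beta(m_0);u=0)$ above $V_{\beta,m}(\omega^*_\beta(m_0);u=1)$ for $m$ slightly above $m_0$, and the trivial regimes $m_0<0$ and $m_0\ge 1-\epsilon$ are handled separately; the paper is exactly as informal as you are about promoting this local statement to global monotonicity of $P(m)$, so you lose nothing there.

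The one substantive issue is condition (ii), and it sits exactly where you flagged it; moreover, your proposed repair would not work. The hypothesis \eqref{eq:diffIdx1} constrains the right derivative of $\Delta(m):=V_{\beta,m}(\omega;u=0)-V_{\beta,m}(\omega;u=1)$ only while $\omega\in P(m)$. At an exit point $m^*$ with $\Delta(m^*)=0$ and $\Delta<0$ just after, one can only conclude $\Delta'_+(m^*)=0$, and \eqref{eq:diffIdx1} says nothing on $(m^*,m^*+\eta)$, so a dip of the form $-(m-m^*)^{3/2}$ is not excluded. Your idea of leveraging the monotonicity of the one-sided derivatives fails because, while each of $V_{\beta,m}(\omega;u=0)$ and $V_{\beta,m}(\omega;u=1)$ is convex in $m$ with nondecreasing right derivative, their difference $\Delta$ is only a difference of convex functions, and the active-side derivative may grow faster than the passive-side one immediately after $m^*$. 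You should know that the paper does not resolve this either: it declares the sufficiency of \eqref{eq:diffIdx1} ``obvious,'' implicitly treating $\Delta$ as nondecreasing everywhere, which \eqref{eq:diffIdx1} delivers only if it is imposed for all $\omega\in[0,1]$ rather than only for $\omega\in P(m_0)$. Under that global reading the conclusion is immediate (integrate $\Delta'\ge 0$ using the Lipschitz continuity in $m$, exactly as in your set $E$ argument, with no boundary case arising), and it also recovers the paper's corollary: by \eqref{eq:rightDiff} one has $\frac{dV_{\beta,m}(\omega;u=0)}{(dm)^+}=1+\beta D_{\beta,m}(\mathcal{T}(\omega))\ge 1$ while $\frac{dV_{\beta,m}(\omega;u=1)}{(dm)^+}\le \frac{\beta}{1-\beta}$, so for $\beta\le 0.5$ the global inequality holds automatically. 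So either adopt the global reading of (ii) or supply the missing boundary-case argument; as written, your proof of (ii) is incomplete at precisely the point you identified, and the paper's own proof is no more rigorous there.
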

\begin{proof}
	The proof of~\eqref{eq:rightDiff} follows directly from the argument in Theorem 1 in \citet{L2021} and is omitted here. To prove the sufficiency of~\eqref{eq:diffIdx}, we note that if it is true then there exists a $\Delta m>0$ such that $\forall~m\in(m_0,m_0+\Delta m)$,
	\begin{eqnarray}
		&~~~V_{\beta,m}(\omega^*_{\beta}(m_0);u=0)- V_{\beta,m_0}(\omega^*_{\beta}(m_0);u=0)  \nonumber \\
		&>V_{\beta,m}(\omega^*_{\beta}(m_0);u=1)-V_{\beta,m_0}(\omega^*_{\beta}(m_0);u=1).\nonumber
	\end{eqnarray}
	Since $V_{\beta,m_0}(\omega^*_{\beta}(m_0);u=0)=V_{\beta,m_0}(\omega^*_{\beta}(m_0);u=1)$, we have $V_{\beta,m}(\omega^*_{\beta}(m_0);u=0)>V_{\beta,m}(\omega^*_{\beta}(m_0);u=1)$ which implies that the threshold $\omega^*_{\beta}(m_0)$ remains in the passive set as $m$ continuously increases so $P(m)$ is monotonically increasing with $m$. This conclusion is clearly true for the trivial case of $m<0$ or $m\ge1-\epsilon$. The sufficiency of~\eqref{eq:diffIdx1} is obvious because then it is impossible for any $\omega\in P(m)$ to escape from $P(m)$ as $m$ increases due to the nondecreasing property of $V_{\beta,m}(\omega;u=0)-V_{\beta,m}(\omega;u=1)$ enforced by~\eqref{eq:diffIdx1}.
\end{proof}
Theorem~\ref{thm:indexability} essentially provides a way for checking the indexability condition in terms of the passive times. For example, equation~\eqref{eq:diffIdx} is equivalent to for any $m\in[0,1-\epsilon)$,
{\small\begin{eqnarray}
		&\beta\Bigl[(1-\epsilon)\omega_{\beta}^{*}(m)D_{\beta,m}(p_{11}) +(1-(1-\epsilon)\omega_{\beta}^{*}(m))D_{\beta,m}\Bigl(\mathcal{T}\bigl(\tfrac{\epsilon\omega_{\beta}^{*}(m)}{\epsilon\omega_{\beta}^{*}(m)+1-\omega_{\beta}^{*}(m)}\bigr)\Bigr)\Bigr]\nn\\
		&<1+\beta D_{\beta,m}(\mathcal{T}(\omega_{\beta}^{*}(m))).\label{eqn:orderPT}
 \end{eqnarray}}
The above strict inequality clearly holds if $\beta<0.5$ since $D_{\beta,m}(\cdot)\in[0,\frac{1}{1-\beta}]$ for any $m\in\mathbbm{R}$. When $\beta=0.5$, we prove by contradiction that the strict inequality~\eqref{eqn:orderPT} must hold under the threshold structure of the optimal policy. If $\omega_{\beta}^{*}(m)=0$ then~\eqref{eqn:orderPT} is clearly true. Assume that the left and right sides of~\eqref{eqn:orderPT} are equal and $\omega_{\beta}^{*}(m)\neq0$. In this case, we have
\begin{eqnarray}
	&&D_{\beta,m}(p_{11}) = \tfrac{1}{1-\beta},\label{eqn:d1}\\
	&&D_{\beta,m}(\mathcal{T}(\omega_{\beta}^{*}(m)))=0.\label{eqn:2}
\end{eqnarray}
Equation~\eqref{eqn:2} implies that starting from
$\mathcal{T}(\omega_{\beta}^{*}(m))$, always activating the arm is
strictly optimal. This means that the threshold
$\omega_{\beta}^{*}(m)$ is strictly below $p_{11}$ and we have a
contradiction to~\eqref{eqn:d1}. Another easier way to see that the
bandit is indexable if $\beta\le0.5$ is that \eqref{eq:diffIdx1} would
be satisfied where no strict inequality is required. However,
condition~\eqref{eqn:orderPT} provides a convenient way for
approximately computing the passive times as well as the value
functions which leads to an efficient algorithm for evaluating the
indexability and solving for the Whittle index function for any
$\beta\in(0,1)$, as detailed in the next section.
\begin{corollary}
	The restless bandit is indexable if~$\beta\le 0.5$.
\end{corollary}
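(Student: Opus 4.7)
The plan is to invoke condition (ii) of Theorem~\ref{thm:indexability}, which only requires the weak inequality $\frac{dV_{\beta,m}(\omega;u=0)}{(dm)^+}\ge \frac{dV_{\beta,m}(\omega;u=1)}{(dm)^+}$ at every $m_0$ and every $\omega\in P(m_0)$ — this weak form is exactly what makes $\beta\le 1/2$ (and not just $\beta<1/2$) admissible. So I would reduce indexability to a uniform comparison of right derivatives in $m$ of the two action-conditioned value functions, and show this comparison follows from a trivial bound on the passive time.

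Concretely, the first step is to differentiate the Bellman equations in~\eqref{eqn:vf01} with respect to $m$ from the right. Using the identity~\eqref{eq:rightDiff} from Theorem~\ref{thm:indexability} (which expresses the right $m$-derivative of $V_{\beta,m}$ as the passive time $D_{\beta,m}$), I obtain
\begin{align*}
\left.\tfrac{dV_{\beta,m}(\omega;u=0)}{(dm)^+}\right|_{m=m_0} &= 1+\beta\, D_{\beta,m_0}(\mathcal{T}(\omega)),\\
\left.\tfrac{dV_{\beta,m}(\omega;u=1)}{(dm)^+}\right|_{m=m_0} &= \beta\bigl[(1-\epsilon)\omega\, D_{\beta,m_0}(p_{11})+(1-(1-\epsilon)\omega)\, D_{\beta,m_0}(\mathcal{T}(\phi(\omega)))\bigr].
\end{align*}
Here I need to argue that the differentiation of the expectation passes through at the right derivative; this follows because each term on the right of~\eqref{eqn:vf01} is itself a value function for a sub-problem with subsidy $m$ and hence Lipschitz and convex in $m$ (by the same interchange argument used in the paper just before Theorem~\ref{thm:indexability}), so its one-sided derivative exists and equals the corresponding passive time.

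Second, I would apply the obvious uniform bound $0\le D_{\beta,m}(\omega')\le \tfrac{1}{1-\beta}$ (valid since $D_{\beta,m}$ is a discounted sum of $\{0,1\}$-indicators). This bounds the active-action derivative above by $\beta\cdot\tfrac{1}{1-\beta}=\tfrac{\beta}{1-\beta}$, while the passive-action derivative is bounded below by $1$. When $\beta\le 1/2$ we have $\tfrac{\beta}{1-\beta}\le 1$, so
\[
\left.\tfrac{dV_{\beta,m}(\omega;u=1)}{(dm)^+}\right|_{m=m_0}\le \tfrac{\beta}{1-\beta}\le 1\le \left.\tfrac{dV_{\beta,m}(\omega;u=0)}{(dm)^+}\right|_{m=m_0}
\]
holds for every $m_0\in\mathbb{R}$ and every $\omega\in[0,1]$, in particular on $P(m_0)$. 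Condition~\eqref{eq:diffIdx1} of Theorem~\ref{thm:indexability} is therefore met, yielding indexability.

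There is essentially no main obstacle: once one observes that the weak inequality in condition (ii) of Theorem~\ref{thm:indexability} is all that is required, the trivial bound $D_{\beta,m}\le 1/(1-\beta)$ closes the gap whenever $\beta/(1-\beta)\le 1$, i.e.\ $\beta\le 1/2$. The only mild care needed is the justification that the right $m$-derivative of each expected continuation term equals the passive time of the corresponding continuation sub-problem, which reuses the Lipschitz-and-convex-in-$m$ argument given in the paragraph preceding Theorem~\ref{thm:indexability}.
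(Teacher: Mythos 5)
Your proof is correct, and it coincides with the route the paper itself flags in a single sentence just before the corollary (``another easier way to see that the bandit is indexable if $\beta\le0.5$ is that \eqref{eq:diffIdx1} would be satisfied since no strict inequality is required''), which you have fleshed out properly: differentiating \eqref{eqn:vf01} from the right in $m$, invoking \eqref{eq:rightDiff} for each continuation value, and closing with the trivial bound $0\le D_{\beta,m}(\cdot)\le\frac{1}{1-\beta}$, so that the active-side derivative is at most $\frac{\beta}{1-\beta}\le 1$ while the passive-side derivative is at least $1$. The paper's \emph{detailed} argument is different: it works through condition (i) of Theorem~\ref{thm:indexability}, rewriting the strict inequality \eqref{eq:diffIdx} as the passive-time inequality \eqref{eqn:orderPT}, observing the same trivial bound settles $\beta<0.5$, and then handling the boundary case $\beta=0.5$ by contradiction --- equality in \eqref{eqn:orderPT} would force $D_{\beta,m}(p_{11})=\frac{1}{1-\beta}$ and $D_{\beta,m}(\mathcal{T}(\omega^*_\beta(m)))=0$ simultaneously, which is incompatible because the latter places the threshold strictly below $p_{11}$. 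The trade-off is worth noting: your condition-(ii) argument needs no strictness and, more substantively, never invokes the threshold structure of the optimal policy, so it covers parameter ranges where $\beta\le 0.5$ but the threshold conditions \eqref{eqn:betaThIdx} may fail (e.g.\ $p_{11}-p_{01}$ large enough that $\frac{1}{(3-\epsilon)(p_{11}-p_{01})}<0.5$), whereas the paper's condition-(i) route presupposes a threshold policy both for the equivalence of \eqref{eq:diffIdx} with \eqref{eqn:orderPT} and for the contradiction step; in exchange, the paper's route establishes the strict inequality \eqref{eqn:orderPT}, which is the form it later exploits for the online indexability check and the approximate-index computations. Your only stated technical caution --- justifying that the right $m$-derivative passes through the expectation in \eqref{eqn:vf01} --- is handled adequately by your observation that each continuation term is itself a value function, convex and Lipschitz in $m$, so \eqref{eq:rightDiff} applies to it directly and one-sided derivatives add.
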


\section{The Whittle Index Policy}\label{sec:WhittleAlgor}

In this section, we design an efficient algorithm by approximating the Whittle index.

\subsection{The Approximated Whittle Index}
The threshold structure of the optimal single-arm policy under certain conditions yields the following iterative nature of the dynamic equations for both $D_{\beta,m}(\omega)$ and $V_{\beta,m}(\omega)$. Define \emph{the first crossing time}
\begin{equation}
	L(\omega,\omega')=\min_{0\le k<\infty}\{k:\mathcal{T}^{k}(\omega)>\omega'\}.
\end{equation}
In the above $\Tau^0(\omega)\defeq\omega$ and we set $L(\omega,\omega')=+\infty$ if $\Tau^k(\omega)\le\omega$ for all $k\ge0$. Clearly $L(\omega,\omega')$ is the minimum time slots required for a belief state $\omega$ to stay in the passive set $P(m)$ before the arm is activated given a threshold $\omega'\in[0,1)$. Consider the nontrivial case where $p_{01},p_{11}\in(0,1)$ and $p_{01}\neq p_{11}$ such that the Markov chain of the internal arm states is aperiodic and irreducible and that the belief update is action-dependent. From~\eqref{eqn:kTau}, if $p_{11}> p_{01}$ then
\begin{equation}
	L(\omega,\omega')=
	\begin{cases}
		0, & \omega>\omega' \\
		\bigg\lfloor \log_{p_{11}-p_{01}}^{\frac{p_{01}-\omega'(1-p_{11}+p_{01})}{p_{01}-\omega(1-p_{11}+p_{01})}} \bigg\rfloor +1, & \omega \leq \omega' <\omega_{o} \\
		\infty, & \omega \leq \omega',\omega' \geq \omega_{o}
	\end{cases};\label{eqn:posL}
\end{equation}
or if $p_{11}<p_{01}$ then
\begin{equation}
	L(\omega,\omega')=
	\begin{cases}
		0, & \omega>\omega' \\
		1, & \omega \leq \omega',\mathcal{T}(\omega)>\omega' \\
		\infty, & \omega \leq \omega',\mathcal{T}(\omega)\leq\omega'
	\end{cases}.\label{eqn:negL}
\end{equation}
To illustrate~\eqref{eqn:posL} and~\eqref{eqn:negL}, note that a belief state will converge to the stationary belief value given by~\eqref{eq:stationary} monotonically for $p_{11}>p_{01}$ or in an oscillating manner for $p_{11}<p_{01}$ under passive actions (see Fig.~3 and Fig.~4 in \cite{LZ2010}). Suppose that the following conditions are satisfied such that the optimal single-arm policy is a threshold policy and the indexability holds:
\begin{equation}
	\beta\le
	\begin{cases}
		\min\left\{\frac{1}{(3-\epsilon)(p_{11}-p_{01})},0.5\right\}, & \text{if}~p_{11}> p_{01} \\[4pt]
		\min\left\{\frac{1}{(5-2\epsilon)(p_{01}-p_{11})},0.5\right\}, & \text{if}~p_{11}< p_{01}
	\end{cases}.\label{eqn:betaThIdx}
\end{equation}
To solve for the Whittle index function $W(\omega)$, given the current
arm state $\omega$, we aim to find out the minimum subsidy $m$ that
makes it as a threshold:
\begin{eqnarray}
	&V_{\beta,m}(\omega)=V_{\beta,m}(\omega;u=1)=V_{\beta,m}(\omega;u=0),\label{eqn:equalVatTh}~~~~~\\
	&V_{\beta,m}(\omega;u=1)=(1-\epsilon)\omega+\beta[(1-\epsilon)\omega V_{\beta,m}(p_{11})
	\notag\\&~~~~~~~~~~~~~~~~~~~~~~~~~~+(1-(1-\epsilon)\omega)V_{\beta,m}(\mathcal{T}\circ\phi(\omega))], \label{eqn:v1Expand}\\[4pt]
	&V_{\beta,m}(\omega;u=0)=m+\beta V_{\beta,m}(\mathcal{T}(\omega)).\label{eqn:v0Expand}~~~~~~~~~~~~~~~~
\end{eqnarray}
Given a threshold $\omega^*_\beta(m)\in[0,1)$ and
any~$\omega\in[0,1]$, the value function $V_{\beta,m}(\omega)$ can be
expanded by the first crossing time as
\begin{gather}\label{eqn:vExpand}
	\begin{aligned}
		V_{\beta,m}(\omega)&=\frac{1-\beta^{L(\omega,\omega^*_\beta(m))}}{1-\beta}m+\beta^{L(\omega,\omega^*_\beta(m))}V_{\beta,m}(\mathcal{T}^{L(\omega,\omega^*_\beta(m))}(\omega);u=1)\\
		&=\frac{1-\beta^{L(\omega,\omega^*_\beta(m))}}{1-\beta}m+\beta^{L(\omega,\omega^*_\beta(m))}\bigg\{(1-\epsilon)\mathcal{T}^{L(\omega,\omega^*_\beta(m))}(\omega)  \\
		&\quad+\beta\bigg[(1-\epsilon)\mathcal{T}^{L(\omega,\omega^*_\beta(m))}(\omega)V_{\beta,m}(p_{11})+(1-(1-\epsilon)\mathcal{T}^{L(\omega,\omega^*_\beta(m))}(\omega)) \\
		&\quad\quad\quad\quad\times V_{\beta,m}\bigg(\mathcal{T}\bigg(\tfrac{\epsilon \mathcal{T}^{L(\omega,\omega^*_\beta(m))}(\omega)}{\epsilon \mathcal{T}^{L(\omega,\omega^*_\beta(m))}(\omega)+1-\mathcal{T}^{L(\omega,\omega^*_\beta(m))}(\omega)}\bigg)\bigg)\bigg]\bigg\}.
	\end{aligned}
\end{gather}
\normalsize There is no doubt that the last item of the above equation
has caused us trouble in solving for $V_{\beta,m}(\omega)$. However,
if we let
\begin{gather}
	\begin{aligned}
		f(\omega, \omega_{\beta}^{*}(m))&=\mathcal{T}\bigg(\tfrac{\epsilon \mathcal{T}^{L(\omega,\omega_{\beta}^{*}(m))}(\omega)}{\epsilon \mathcal{T}^{L(\omega,\omega_{\beta}^{*}(m))}(\omega)+1-\mathcal{T}^{L(\omega,\omega_{\beta}^{*}(m))}(\omega)}\bigg)  \\
		&=\frac{p_{11}\epsilon \mathcal{T}^{L(\omega,\omega_{\beta}^{*}(m))}(\omega)+p_{01}(1-\mathcal{T}^{L(\omega,\omega_{\beta}^{*}(m))}(\omega))}{\epsilon \mathcal{T}^{L(\omega,\omega_{\beta}^{*}(m))}(\omega)+1-\mathcal{T}^{L(\omega,\omega_{\beta}^{*}(m))}(\omega)}
	\end{aligned}
\end{gather}\normalsize
and construct iteratively the sequence $\{k_{n}\}$ as
$k_{n+1}=f(k_{n},\omega_{\beta}^{*}(m))$ with $k_{0}=\omega$. We then
get the following sequence of equations:
\[
{\begin{aligned}
		V_{\beta,m}(k_{0})&=\frac{1-\beta^{L(k_{0},\omega_{\beta}^{*}(m))}}{1-\beta}m+\beta^{L(k_{0},\omega_{\beta}^{*}(m))}\big\{(1-\epsilon)\mathcal{T}^{L(k_{0},\omega_{\beta}^{*}(m))}(k_{0})+\beta\big[(1-\epsilon)\\
		&\quad\times\mathcal{T}^{L(k_{0},\omega_{\beta}^{*}(m))}(k_{0})V_{\beta,m}(p_{11})+(1-(1-\epsilon)\mathcal{T}^{L(k_{0},\omega_{\beta}^{*}(m))}(k_{0}))V_{\beta,m}(k_{1})\big]\big\}
		\\
		V_{\beta,m}(k_{1})&=\frac{1-\beta^{L(k_{1},\omega_{\beta}^{*}(m))}}{1-\beta}m+\beta^{L(k_{1},\omega_{\beta}^{*}(m))}\big\{(1-\epsilon)\mathcal{T}^{L(k_{1},\omega_{\beta}^{*}(m))}(k_{1})+\beta\big[(1-\epsilon)\\
		&\quad\times\mathcal{T}^{L(k_{1},\omega_{\beta}^{*}(m))}(k_{1})V_{\beta,m}(p_{11})+(1-(1-\epsilon)\mathcal{T}^{L(k_{1},\omega_{\beta}^{*}(m))}(k_{1}))V_{\beta,m}(k_{2})\big]\big\}
		\\
		\cdots
		\\
		V_{\beta,m}(k_{n})&=\frac{1-\beta^{L(k_{n},\omega_{\beta}^{*}(m))}}{1-\beta}m+\beta^{L(k_{n},\omega_{\beta}^{*}(m))}\big\{(1-\epsilon)\mathcal{T}^{L(k_{n},\omega_{\beta}^{*}(m))}(k_{n})\\
		&\quad+\beta\big[(1-\epsilon)\cdot\mathcal{T}^{L(k_{n},\omega_{\beta}^{*}(m))}(k_{n})V_{\beta,m}(p_{11})
		+(1-(1-\epsilon)\\
		&\quad\quad\quad\times\mathcal{T}^{L(k_{n},\omega_{\beta}^{*}(m))}(k_{n}))V_{\beta,m}(k_{n+1})\big]\big\}
		\\
		\cdots
\end{aligned}}
.
\]
For sufficiently large $n$, we can get an estimation of
$V_{\beta,m}(\omega)=V_{\beta,m}(k_{0})$ with an arbitrarily small
error by setting $V_{\beta,m}(k_{n+1})=0$ whose error is discounted by
$\beta$ in computing $V_{\beta,m}(k_{n})$ thus causing a geometrically
decreasing error propagation in the backward computation process for
$V_{\beta,m}(k_0)$. Note that we first compute $V_{\beta,m}(p_{11})$
in the same way by setting $k_0=p_{11}$ in the above equation
set. Therefore we can have an estimation of $V_{\beta,m}(\omega)$ with
arbitrarily high precision for any $\omega\in[0,1]$. Interestingly,
extensive numerical results found that $\{k_{n}\}$ quickly converges
to a limit belief state~$k^*$ (independent of $k_0$) (see Fig.~\ref{fig:convergeP} for an example). Specifically, after~$4$
iterations, the difference $\vert k_4-k\vert$ becomes too small to affect the performance of our algorithm as discussed in Sec.~\ref{sec:experiments}. So we can set
$V_{\beta,m}(k_{5})=V_{\beta,m}(k_{4})$ and efficiently solve the {\em
	finite} linear equation set (up to $V_{\beta,m}(k_{4})$). In
general, the $n$-iteration Whittle index is based on the solution of
the following equations:
\begin{eqnarray}\nn
	V_{\beta,m}(p_{11})&&=\frac{1-\beta^{L(p_{11},\omega_{\beta}^{*}(m))}}{1-\beta}m+\beta^{L(p_{11},\omega_{\beta}^{*}(m))}\big\{(1-\epsilon)\mathcal{T}^{L(p_{11},\omega_{\beta}^{*}(m))}(p_{11})\\\nn
	&&\quad+\beta\big[(1-\epsilon)\mathcal{T}^{L(p_{11},\omega_{\beta}^{*}(m))}(p_{11})V_{\beta,m}(p_{11})
	+(1-(1-\epsilon)\\\nn
	&&\quad\quad\quad\times\mathcal{T}^{L(p_{11},\omega_{\beta}^{*}(m))}(p_{11}))V_{\beta,m}(k_{1})\big]\big\}
	\\\nn
	V_{\beta,m}(k_{1})&&=\frac{1-\beta^{L(k_{1},\omega_{\beta}^{*}(m))}}{1-\beta}m+\beta^{L(k_{1},\omega_{\beta}^{*}(m))}\big\{(1-\epsilon)\mathcal{T}^{L(k_{1},\omega_{\beta}^{*}(m))}(k_{1})+\beta\big[(1-\epsilon)\\\nn
	&&\quad\times\mathcal{T}^{L(k_{1},\omega_{\beta}^{*}(m))}(k_{1})V_{\beta,m}(p_{11})+(1-(1-\epsilon)\mathcal{T}^{L(k_{1},\omega_{\beta}^{*}(m))}(k_{1}))V_{\beta,m}(k_{2})\big]\big\}
	\\\nn
	&& \vdots
	\\\nn
	V_{\beta,m}(k_{n})&&=\frac{1-\beta^{L(k_{n},\omega_{\beta}^{*}(m))}}{1-\beta}m+\beta^{L(k_{n},\omega_{\beta}^{*}(m))}\big\{(1-\epsilon)\mathcal{T}^{L(k_{n},\omega_{\beta}^{*}(m))}(k_{n})\\\nn
	&&\quad+\beta\big[(1-\epsilon)\mathcal{T}^{L(k_{n},\omega_{\beta}^{*}(m))}(k_{n})V_{\beta,m}(p_{11})
	+(1-(1-\epsilon)\\\nn
	&&\quad\quad\quad\times\mathcal{T}^{L(k_{n},\omega_{\beta}^{*}(m))}(k_{n}))V_{\beta,m}(k_{n})\big]\big\}.
\end{eqnarray}\normalsize
After the value functions are (approximately) solved, we can plot the active value function~$V_{\beta,m}(\omega;u=1)$ and the passive one~$V_{\beta,m}(\omega;u=0)$ to see that they intersect at one single point (the threshold) verifying the optimality of the threshold policy proven by Theorem~\ref{thm:threshold} (See Fig.~\ref{fig:thresholdP} for an example). According to Theorem~\ref{thm:indexability}, the passive time
$D_{\beta,m}(\omega)$ can also be approximately solved based on the
following equations:
\[
\begin{aligned} D_{\beta,m}(p_{11})&=\frac{1-\beta^{L(p_{11},\omega_{\beta}^{*}(m))}}{1-\beta}+\beta^{L(p_{11},\omega_{\beta}^{*}(m))+1}\big\{(1-\epsilon)\mathcal{T}^{L(p_{11},\omega_{\beta}^{*}(m))}(p_{11})\\
	&\quad\times D_{\beta,m}(p_{11})+(1-(1-\epsilon)\mathcal{T}^{L(p_{11},\omega_{\beta}^{*}(m))}(p_{11}))D_{\beta,m}(k_{1})\big\}
	\\
	D_{\beta,m}(k_{1})&=\frac{1-\beta^{L(k_{1},\omega_{\beta}^{*}(m))}}{1-\beta}+\beta^{L(k_{1},\omega_{\beta}^{*}(m))+1}\big\{(1-\epsilon)\mathcal{T}^{L(k_{1},\omega_{\beta}^{*}(m))}(k_{1})\\
	&\quad\times D_{\beta,m}(p_{11})+(1-(1-\epsilon)\mathcal{T}^{L(k_{1},\omega_{\beta}^{*}(m))}(k_{1}))D_{\beta,m}(k_{2})\big\}
	\\
	\vdots
	\\
	D_{\beta,m}(k_{n})&=\frac{1-\beta^{L(k_{n},\omega_{\beta}^{*}(m))}}{1-\beta}+\beta^{L(k_{n},\omega_{\beta}^{*}(m))+1}\big\{(1-\epsilon)\mathcal{T}^{L(k_{n},\omega_{\beta}^{*}(m))}(k_{n})\\	
	&\quad\times D_{\beta,m}(p_{11})+(1-(1-\epsilon)\mathcal{T}^{L(k_{n},\omega_{\beta}^{*}(m))}(k_{n}))D_{\beta,m}(k_{n})\big\}
\end{aligned}.
\]
Substituting $\omega$ for $\omega^*_\beta(m)$ in the above $n+1$ linear equations with $n+1$ unknowns (first solving for $\omega=p_{11}$), we can obtain $V_{\beta,m}(\omega')$ and $D_{\beta,m}(\omega')$ for any $\omega'\in[0,1]$ according to the linear equation sets. The indexability condition~\eqref{eq:diffIdx} in Theorem~\ref{thm:indexability} can be checked {\em online}: for the original multi-armed bandit problem and for each arm at state $\omega(t)$ at time $t$, we compute its approximated Whittle index $W(\omega(t))$ by solving a set of linear equations, which has a polynomial complexity of the iteration number $n$, independent of the decision time $t$. At time $t$, for each arm, if $W(\cdot)$ is found to be nondecreasing with the arm states $(\omega(1),\omega(2),\ldots,\omega(t))$ appeared so far starting from the initial belief vector $\boldsymbol{\omega}(1)$ defined in~\eqref{eqn:beliefVector}, then the indexability has not been violated. Interestingly, extensive numerical studies have shown that the indexability is always satisfied as illustrated in Figs.~\ref{fig:whittleN} and~\ref{fig:whittleP} in Sec.~\ref{sec:experiments}.

\begin{figure}
	\begin{minipage}[h]{0.5\linewidth}
		\centering
		\includegraphics[height=5.3cm,width=5.3cm]{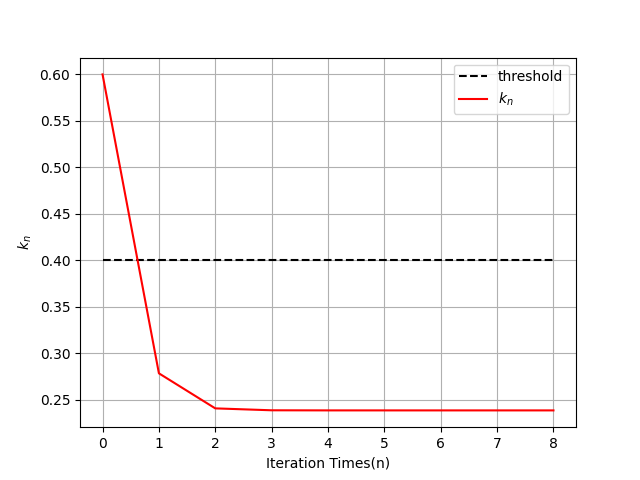}
		\caption{The Convergence of~$k_n$}
		\label{fig:convergeP}
	\end{minipage}%
	\begin{minipage}[h]{0.5\linewidth}
		\centering
		\includegraphics[height=5.3cm,width=5.3cm]{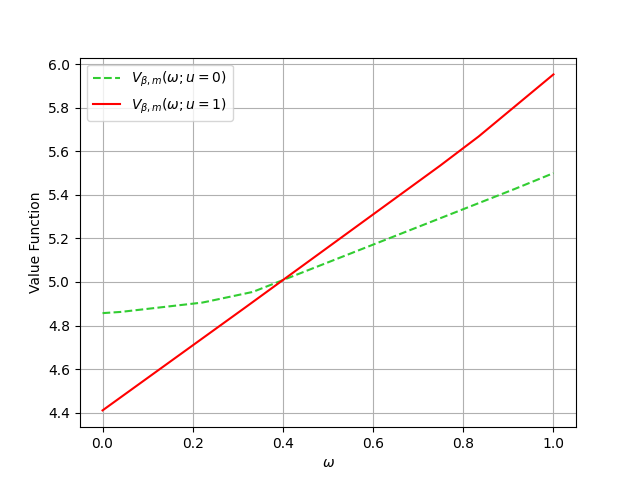}
		\caption{The Optimality of Threshold Policy}
        \label{fig:thresholdP}
	\end{minipage}
\end{figure}

For large $\beta\in(0,1)$ where the threshold structure of the optimal policy or the indexability may not hold (\ie condition~\eqref{eqn:betaThIdx} is not satisfied), we can still use the above process to solve for the subsidy $m$ that makes \eqref{eqn:equalVatTh} true if it exists. Note that after computing the value functions appeared in~\eqref{eqn:equalVatTh} in terms of $m$, both $V_{\beta,m}(\omega;u=1)$ and $V_{\beta,m}(\omega;u=0)$ are linear (affine) in $m$ and their equality gives a \emph{unique} solution of $m$ if their linear coefficients are not equal. This $m$, if exists, can thus be used as the approximated Whittle index $W(\omega)$ \emph{without} requiring indexability or threshold-based optimal policy. If it does not exist, we can simply set $W(\omega)=\omega B$. The existence of such an $m$ is defined as \emph{the relaxed indexability} in~\citet{L2021}. Note that extensive numerical studies have shown that the relaxed indexability of our model with imperfect state observations is always satisfied as well. Before summarizing our general algorithm for all $\beta\in(0,1)$ in Section~\ref{sec:alg}, we solve for the approximated Whittle index function in closed-form for the simplest case of $0$-iteration, which is referred to as \emph{the imperfect Whittle index}. Note that if $\epsilon\rightarrow 0$ then $\mathcal{T}(\frac{\epsilon\omega}{\epsilon\omega+1-\omega})\rightarrow p_{01}$. Thus when $\epsilon$ is sufficiently small, we can approximate $V_{\beta,m}(\mathcal{T}(\frac{\epsilon\omega}{\epsilon\omega+1-\omega}))$ by $V_{\beta,m}(p_{01})$. Under this approximation, we have, for any $\omega\in[0,1]$,
\[
\begin{aligned}
	&V_{\beta,m}(\omega;u=1)=(1-\epsilon)\omega+\beta\big[(1-\epsilon)\omega V_{\beta,m}(p_{11})+(1-(1-\epsilon)\omega)V_{\beta,m}(p_{01})\big] \\
	&V_{\beta,m}(\omega;u=0)=m+\beta V_{\beta,m}(\mathcal{T}(\omega)) \\
	&V_{\beta,m}(\omega)=\frac{1-\beta^{L(\omega,\omega_{\beta}^{*}(m))}}{1-\beta}m+\beta^{L(\omega,\omega_{\beta}^{*}(m))}\big\{(1-\epsilon)\mathcal{T}^{L(\omega,\omega_{\beta}^{*}(m))}(\omega)\\
	&\quad\quad\quad\quad~~+\beta\big[(1-\epsilon)\mathcal{T}^{L(\omega,\omega_{\beta}^{*}(m))}(\omega)V_{\beta,m}(p_{11})\\
	&\quad\quad\quad\quad~~+(1-(1-\epsilon)\mathcal{T}^{L(\omega,\omega_{\beta}^{*}(m))}(\omega))V_{\beta,m}(p_{01})\big]\big\}\\
\end{aligned}.
\]
By using the above three equations, we can directly solve for $V_{\beta,m}(p_{01})$ and $V_{\beta,m}(p_{11})$ in closed-form.\\
When $p_{11}>p_{01}$, $V_{\beta,m}(p_{01})=$
\begin{equation*}
	\begin{cases}
		\frac{(1-\epsilon)p_{01}}{(1-\beta)(1-\beta(1-\epsilon)p_{11}+\beta(1-\epsilon)p_{01})},  \text{if } \omega_{\beta}^{*}(m)<p_{01}\\[10pt]
		\frac{(1-\beta(1-\epsilon)p_{11})(1-\beta^{L(p_{01},\omega_{\beta}^{*}(m))})m+(1-\epsilon)(1-\beta)\beta^{L(p_{01},\omega_{\beta}^{*}(m))}\mathcal{T}^{L(p_{01},\omega_{\beta}^{*}(m))}(p_{01})}{(1-\beta(1-\epsilon)p_{11})(1-\beta)(1-\beta^{L(p_{01},\omega_{\beta}^{*}(m))+1})+(1-\epsilon)(1-\beta)^{2}\beta^{L(p_{01},\omega_{\beta}^{*}(m))+1}\mathcal{T}^{L(p_{01},\omega_{\beta}^{*}(m))}(p_{01})}
		, \\~~~~~~~~~~~~~\text{if } p_{01}\leq\omega_{\beta}^{*}(m)<\omega_{0}\\
		\frac{m}{1-\beta},  \text{if } \omega_{\beta}^{*}(m)\geq\omega_{0}
	\end{cases},
\end{equation*}
\begin{equation*}
	V_{\beta,m}(p_{11})=\begin{cases}
		\frac{(1-\epsilon)p_{11}+\beta(1-(1-\epsilon)p_{11})V_{\beta,m}(p_{01})}{1-\beta(1-\epsilon)p_{11}}, &\text{if }\omega_{\beta}^{*}(m)<p_{11}\\
		\frac{m}{1-\beta}, &\text{if }\omega_{\beta}^{*}(m)\geq p_{11}
	\end{cases}.
\end{equation*}
The approximate Whittle index is given by
{\begin{equation*}
		W(\omega)=\begin{cases}
			\frac{\omega(1-\epsilon)(1-\beta p_{11}+\beta p_{01})}{1-\beta(1-\epsilon)p_{11}+\beta(1-\epsilon)p_{01}}, &\text{if }\omega\leq p_{01}\\
			\frac{(1-\epsilon)(\omega-\beta\mathcal{T}(\omega))+C_{2}(1-\beta)\beta\lbrack(1-\beta(1-\epsilon)p_{11})-(1-\epsilon)(\omega-\beta\mathcal{T}(\omega))\rbrack}{1-\beta(1-\epsilon)p_{11}-C_{1}\beta\lbrack(1-\beta(1-\epsilon)p_{11})-(1-\epsilon)(\omega-\beta\mathcal{T}(\omega))\rbrack}, &\text{if }p_{01}<\omega \leq \omega_{0}\\
			\frac{(1-\epsilon)\omega}{1-\beta(1-\epsilon)p_{11}+\beta(1-\epsilon)\omega}, &\text{if }\omega_{0} < \omega \leq p_{11}\\
			(1-\epsilon)\omega, &\text{if } \omega > p_{11}
		\end{cases},
\end{equation*}}
where
\[
\begin{aligned}
	C_{1}=\frac{(1-\beta(1-\epsilon)p_{11})(1-\beta^{L(p_{01},\omega)})}{(1-\beta(1-\epsilon)p_{11})(1-\beta^{L(p_{01},\omega)+1})+(1-\epsilon)(1-\beta)\beta^{L(p_{01},\omega)+1}\mathcal{T}^{L(p_{01},\omega)}(p_{01})},\\
	C_{2}=\frac{(1-\epsilon)\beta^{L(p_{01},\omega)}\mathcal{T}^{L(p_{01},\omega)}(p_{01})}{(1-\beta(1-\epsilon)p_{11})(1-\beta^{L(p_{01},\omega)+1})+(1-\epsilon)(1-\beta)\beta^{L(p_{01},\omega)+1}\mathcal{T}^{L(p_{01},\omega)}(p_{01})}.
\end{aligned}
\]
Similarly, when $p_{01}>p_{11}$, we have
\begin{equation*}
	V_{\beta,m}(p_{11})=\begin{cases}
		\frac{(1-\epsilon)(p_{11}(1-\beta)+\beta
			p_{01})}{(1-\beta)(1-\beta(1-\epsilon)p_{11}+\beta(1-\epsilon)p_{01})},
		\ \text{if }\omega_{\beta}^{*}(m)<p_{11}\\[8pt]
		\frac{(1-\beta(1-(1-\epsilon)p_{01}))m+\beta(1-\epsilon)\mathcal{T}(p_{11})(1-\beta)+\beta^{2}(1-\epsilon)p_{01}}{(1-\beta)(1+\beta(1+\beta)(1-\epsilon)p_{01}-\beta^{2}(1-\epsilon)\mathcal{T}(p_{11}))},
		\ \\ ~~~~~~~~~~~~~~~\text{if }p_{11}\leq\omega_{\beta}^{*}(m)<\mathcal{T}(p_{11})\\[6pt]
		\frac{m}{1-\beta}, \ \text{if }\omega_{\beta}^{*}(m)\geq\mathcal{T}(p_{11})
	\end{cases},
\end{equation*}
\begin{equation*}
	V_{\beta,m}(p_{01})=\begin{cases}
		\frac{(1-\epsilon)p_{01}+\beta(1-\epsilon)p_{01}V_{\beta,m}(p_{11})}{1-\beta(1-(1-\epsilon)p_{01})}, &\text{if }\omega_{\beta}^{*}(m)<p_{01}\\
		\frac{m}{1-\beta}, &\text{if }\omega_{\beta}^{*}(m)\geq p_{01}
	\end{cases}.~~~~~~~~
\end{equation*}
The approximate Whittle index is given by
\begin{equation*}
	W(\omega)=\begin{cases}
		\frac{\omega(1-\epsilon)(1-\beta p_{11}+\beta p_{01})}{1-\beta(1-\epsilon)p_{11}+\beta(1-\epsilon)p_{01}}, &\text{if }\omega\leq p_{11}\\[6pt]
		\frac{(1-\epsilon)(1-\beta+C_{4}\beta)(\beta p_{01}+\omega-\beta\mathcal{T}(\omega))}{1-\beta(1-(1-\epsilon)p_{01})+(1-\epsilon)C_{3}\beta(\beta\mathcal{T}(\omega)-\beta p_{01}-\omega)}, &\text{if }p_{11}<\omega<\omega_{0}\\[6pt]
		\frac{(1-\epsilon)(1-\beta+\beta C_{4})(\beta p_{01}+\omega(1-\beta))}{1-\beta(1-(1-\epsilon)p_{01})-(1-\epsilon)\beta C_{3}(\beta p_{01}+\omega-\beta\omega)}, &\text{if }\omega_{0}\leq\omega<\mathcal{T}(p_{11})\\[6pt]
		\frac{(1-\epsilon)(\beta p_{01}+(1-\beta)\omega)}{1+(1-\epsilon)\beta(p_{01}-\omega)}, &\text{if }\mathcal{T}(p_{11})\leq\omega<p_{01}\\[6pt]
		(1-\epsilon)\omega, &\text{if }\omega\geq p_{01}
	\end{cases},
\end{equation*}
where
\[
\begin{aligned}
	C_{3}=\tfrac{1-\beta(1-(1-\epsilon)p_{01})}{1+\beta(1+\beta)(1-\epsilon)p_{01}-\beta^{2}(1-\epsilon)\mathcal{T}(p_{11})},\\
	C_{4}=\tfrac{\beta(1-\epsilon)\mathcal{T}(p_{11})(1-\beta)+\beta^{2}(1-\epsilon)p_{01}}{1+\beta(1+\beta)(1-\epsilon)p_{01}-\beta^{2}(1-\epsilon)\mathcal{T}(p_{11})}.
\end{aligned}
\]

\subsection{Algorithm}\label{sec:alg}

Our analysis leads to the algorithm for the RMAB model with imperfect observations in
Algorithm~\ref{alg:whittle} for all $\beta\in(0,1)$.
\begin{algorithm}[H]
	\caption{Whittle Index Policy}
	\hspace*{0.02in} {\bf Input:}
	$\beta\in(0,1)$, $T\ge1$,$N\ge2$, $1\le M<N$, iteration number $k$\\
	\hspace*{0.02in} {\bf Input:}
	initial belief state $\omega_n(1)$, $\textbf{P}^{(n)}$, $B_n$, $n=1,...,N$
	\begin{algorithmic}[1]
		\For{$t=1,2,...,T$}
		\For{$n=1,..., N$}
		\State Set the threshold $\omega^*_\beta(m)=\omega_n(t)$ in~\eqref{eqn:vExpand}
		\State Compute $L\left(p_{11}^{(n)}, \omega_n(t)\right)$ and set $\omega=p_{11}^{(n)}$ in~\eqref{eqn:vExpand}
		\State Expand~\eqref{eqn:vExpand} to the $k$th step and solve for $V^{(n)}_{\beta,m}(p_{11}^{(n)})$
		\State Compute $L\left(\Tau\circ\phi(\omega), \omega_n(t)\right)$ and set $\omega=\Tau\circ\phi(\omega)$ in~\eqref{eqn:vExpand}
		\State Expand~\eqref{eqn:vExpand} to the $k$th step and solve for $V^{(n)}_{\beta,m}(\Tau\circ\phi(\omega))$ from $V^{(n)}_{\beta,m}(p_{11}^{(n)})$
		\State Compute $L\left(\Tau(\omega), \omega_n(t)\right)$ and set $\omega=\Tau(\omega)$ in~\eqref{eqn:vExpand}
		\State Expand~\eqref{eqn:vExpand} to the $k$th step and solve for $V^{(n)}_{\beta,m}(\Tau(\omega))$ from $V^{(n)}_{\beta,m}(p_{11}^{(n)})$
		\State Solve for $V^{(n)}_{\beta,m}(\omega;u=1)$ by $V^{(n)}_{\beta,m}(p_{11}^{(n)})$ and $V^{(n)}_{\beta,m}(\Tau\circ\phi(\omega))$ as in~\eqref{eqn:v1Expand}
		\State Solve for $V^{(n)}_{\beta,m}(\omega;u=0)$ by $V^{(n)}_{\beta,m}(\Tau(\omega))$ as in~\eqref{eqn:v0Expand}
		\State Evaluate the solvability of the linear equation of $m$: $V^{(n)}_{\beta,m}(\omega;u=1)$
		\Statex  \qquad\quad~$=V^{(n)}_{\beta,m}(\omega;u=0)$
		\State Set $W(\omega_n(t))=\omega_n(t) B_n$ and skip Step 14 if the above is unsolvable
		\State Compute $W(\omega_n(t))$ as the solution to $V^{(n)}_{\beta,m}(\omega;u=1)=V^{(n)}_{\beta,m}(\omega;u=0)$
		\EndFor
		\State Choose the top $M$ arms with the largest Whittle Indices $W(\omega_n(t))$
		\State Observe the selected $M$ arms and accrue reward $O_n(t)S_n(t)B_n$ from each
		\State observed arm
		\For{$n=1,...,N$}
		\State Update the belief state $\omega_n(t)$ according to \eqref{eqn:beliefUpdate}
		\EndFor
		\EndFor
	\end{algorithmic}\label{alg:whittle}
\end{algorithm}

\section{Optimality for Homogeneous Systems}\label{sec:optimality}

A space-wise homogeneous system for a restless bandit is defined as
the system with $N$ stochastically identical arms, \ie the parameters
$\textbf{P}^{(n)}$ and $B_n$ do not depend on~$n$. In this case, our
algorithm is equivalent to the myopic policy that chooses the arms
with the largest belief values and is optimal.
\begin{theorem}
	Consider a space-wise homogeneous model with positively correlated arms ($p_{11}\ge p_{01}$) and $\epsilon$ satisfying
	\begin{eqnarray}\label{myopic:structCondition}
		\epsilon \leq \frac{p_{01}(1-p_{11})}{p_{11}(1-p_{01})}=\frac{p_{01}p_{10}}{p_{11}p_{00}},
	\end{eqnarray}
	the myopic policy is optimal over both finite and infinite horizons.
\end{theorem}
\begin{proof}
	We adopt notations similar to that in~\citet{LWZ2011} for the case of
	perfect observation ($\epsilon=\delta=0$) but need several
	non-trivial differences due to the additional complexity introduced
	by observation errors. Consider $N$ arms in total and we choose $K$
	arms to active at each step. Let
	$W_{s}(\omega_{1},\ldots,\omega_{N})$ denote the expected total
	discounted reward over $s$ steps when all arms are ordered so the
	probabilities that the underlying random processes are in state 1
	are $\omega_{1}\ge\cdots\ge\omega_{N}$. In \citet{LZK2010}, it has
	been proved that the myopic policy has a dynamic queuing structure
	if the error probability $\epsilon$
	satisfies~\eqref{myopic:structCondition}. Then we have
	{\begin{eqnarray*}
			&&W_{s+1}(\omega_{1},\ldots,\omega_{N})=(1-\epsilon)\sum_{i=1}^{K}\omega_{i} \\
			&& ~~~~~~~~~~~~~~~~~ +\beta
			\mathbbm{E}[W_{s}(p_{11},\ldots,p_{11},\tau(\omega_{K+1}),\ldots,\tau(\omega_{N}),
			\sigma(\cdot),\cdots,\sigma(\cdot))],
	\end{eqnarray*}}
	where
	$W_{0}(\cdot)=0,\tau(\omega)=p_{11}\omega+p_{01}(1-\omega),\sigma(\omega)=\tau(\frac{\epsilon\omega}{\epsilon\omega+1-\omega}),$
	and the expectation is taken over possible outcomes that can occur
	when the $K$ arms that are observed are those at the left end
	($i.e.,$ having probabilities $\omega_{1},\ldots,\omega_{K}$ that
	the underlying random processes are in state 1). We will describe
	it more specifically. For a belief state sequence
	$(\omega_{1},\ldots,\omega_{K})$, we call
	$(\omega_{i_{1}},\ldots,\omega_{i_{s}})$ and
	$(\omega_{j_{1}},\ldots,\omega_{j_{t}})$ a partition of
	$(\omega_{1},\ldots,\omega_{K})$ if they satisfy:
	\begin{itemize}
		\item[(i)] $(\omega_{i_{1}},\ldots,\omega_{i_{s}},\omega_{j_{1}},\ldots,\omega_{j_{t}})$ is a rearrangement of $(\omega_{1},\ldots,\omega_{K})$;
		\item[(ii)] $i_{1}<\cdots<i_{s}$ and $j_{1}<\cdots<j_{t}$.
	\end{itemize}
	Let $\mathcal{P}$ be the collection of all partitions of
	$(\omega_{1},\ldots,\omega_{K})$, the expectation above can be written
	as follows:%
	\normalsize
	\begin{eqnarray*}
		&&~\mathbbm{E}[W_{s}(p_{11},\ldots,p_{11},\tau(\omega_{K+1}),\ldots,\tau(\omega_{N}),
		\sigma(\cdot),\ldots,\sigma(\cdot))] \notag \\
		= &&\sum_{\mathcal{P}}(\prod_{m=1}^{s}(1-\epsilon)\omega_{i_{m}})(\prod_{n=1}^{t}(1-(1-\epsilon)\omega_{j_{n}})) \\
		&&\times
		W_{s}(p_{11},\ldots,p_{11},\tau(\omega_{K+1}),\ldots,\tau(\omega_{N}),\sigma(\omega_{j_{1}}),\ldots,\sigma(\omega_{j_{t}})).
	\end{eqnarray*}
	We can see that when
	$\omega_{1}\geq\omega_{2}\geq\cdots\geq\omega_{N}$,
	$W_{s}(\omega_{1},\ldots,\omega_{N})$ is the value function for the
	myopic policy.
	
	To prove the theorem, we first prove that $\forall s$,
	$W_{s}(\omega_{1},\ldots,\omega_{N})$ is linear in
	$\omega_{i}(1\leq i\leq N)$. We will prove it by induction. It is
	obvious that
	$W_{0}=0,
	W_{1}(\omega_{1},\ldots,\omega_{N})=(1-\epsilon)\sum_{i=1}^{K}\omega_{i}$
	are linear in $\omega_{1},\ldots,\omega_{N}$. Assume it is true for
	$s$,
	$i.e.,
	W_{s}(\omega_{l})=W_{s}(\omega_{1},\ldots,\omega_{l},\ldots,\omega_{N})=a_{l}\omega_{l}+b_{l}$
	$(\forall 1\leq l\leq N)$, where $a_{l}$ and $b_{l}$ are constants
	independent of $\omega_{l}$. Now consider
	$W_{s+1}(\omega_{l})=W_{s+1}(\omega_{1},\ldots,\omega_{l},\ldots,\omega_{N})$. When
	$l>K,$
	\[
	{\small \begin{aligned}
			W_{s+1}(\omega_{1},\ldots,&\omega_{l},\cdots,\omega_{N})=(1-\epsilon)\sum_{i=1}^{K}\omega_{i}\\
			&+\beta
			\mathbbm{E}[W_{s}(p_{11},\ldots,p_{11},\tau(\omega_{K+1}),\ldots,\tau(\omega_{l}),\ldots,\tau(\omega_{N}),\sigma(\cdot),\ldots,\sigma(\cdot))]~~~~~~~~
	\end{aligned}}
	\]
	\normalsize
	In this case, probability terms in expectation are only related to $\omega_{1},\ldots,\omega_{K}$, $\tau(\omega_{l})$ is linear in $\omega_{l}$ and $W_{s}$ is linear in $\tau(\omega_{l})$, thus $W_{s+1}$ is also linear in $\omega_{l}$. \\
	When $l\leq k$, for any partition
	$(\omega_{i_{1}},\ldots,\omega_{i_{s}})$ and
	$(\omega_{j_{1}},\ldots,\omega_{j_{t}})$ of
	$(\omega_{1},\ldots,\omega_{K})$, if
	$\omega_{l}\in\{\omega_{i_{1}},\ldots,\omega_{i_{s}}\}$, the
	corresponding term
	\begin{align*}
		(\prod_{m=1}^{s}(1-\epsilon)&\omega_{i_{m}})\cdot(\prod_{n=1}^{t}(1-(1-\epsilon)\omega_{j_{n}}))\\&~~~~\times W_{s}(p_{11},\ldots,p_{11},\tau(\omega_{K+1}),\ldots,\tau(\omega_{N}),\sigma(\omega_{j_{1}}),\ldots,\sigma(\omega_{j_{t}}))~~~~~~~~~~~~~~~~~~~
	\end{align*}
	\normalsize in the expectation is linear in $\omega_{l}$. If
	$\omega_{l}\in\{\omega_{j_{1}},\ldots,\omega_{j_{t}}\}$, by inductive
	hypothesis there exists $\tilde{a},~\tilde{b}$,\
	\begin{align*}
		&(1-(1-\epsilon)\omega_{l})W_{s}(p_{11},\ldots,p_{11},\tau(\omega_{K+1}),\ldots,\tau(\omega_{N}),\sigma(\omega_{j_{1}}),\ldots,\sigma(\omega_{l}),\ldots,\sigma(\omega_{j_{t}})) \notag \\
		&=(1-(1-\epsilon)\omega_{l})(\tilde{a}\sigma(\omega_{l})+\tilde{b})=\tilde{a}(\epsilon
		p_{11}\omega_{l}+p_{01}(1-\omega_{l}))+\tilde{b}(1-(1-\epsilon)\omega_{l}).
	\end{align*}
	\normalsize
	The equation above shows that $W_{s+1}$ is linear in $\omega_{l}$, thus the proposition is proved.
	\\ \hspace*{0.8cm}From above, we can assume
	$W_{s}(\omega_{1},\ldots,x,\ldots,y,\ldots,\omega_{N})=ax+by+cxy+d$,
	where $a,b,c,d$ are constants. If we swap the positions of $x$ and $y$
	and make differences between the two, we have
	\begin{align*}
		&W_{s}(\omega_{1},\ldots,x,\ldots,y,\ldots,\omega_{N})-W_{s}(\omega_{1},\ldots,y,\ldots,x,\ldots,\omega_{N}) \notag \\
		=&(x-y)[W_{s}(\omega_{1},\ldots,1,\ldots,0,\ldots,\omega_{N})-W_{s}(\omega_{1},\ldots,0,\ldots,1,\ldots,\omega_{N})].
	\end{align*}
	Next we will prove two important properties of $W_{s}$. We let
	$\bar{\omega_{i}}$ denote any sequence of $\omega_{i}$s, possibly
	empty.
	We still adopt induction to prove next two properties:\\[1em]
	(A) $1-\epsilon+W_{s}(\bar{\omega_{1}},y,\bar{\omega_{2}},\bar{\omega_{3}})-W_{s}(\bar{\omega_{1}},\bar{\omega_{2}},y,\bar{\omega_{3}})\geq 0.$ \\
	(B) $\forall y>x, W_{s}(\bar{\omega_{1}},y,\bar{\omega_{2}},x,\bar{\omega_{3}})-W_{s}(\bar{\omega_{1}},x,\bar{\omega_{2}},y,\bar{\omega_{3}})\geq 0.$ \\[1em]
	These are clearly true for $s=1$. We will begin by proving an
	induction step for (B). As above, the expression in (B) is equal to
	$
	(y-x)[W_{s}(\bar{\omega_{1}},1,\bar{\omega_{2}},0,\bar{\omega_{3}})-W_{s}(\bar{\omega_{1}},0,\bar{\omega_{2}},1,\bar{\omega_{3}})].
	$
	\\ Suppose the position exchange occur in the $i$th and $j$th place,
	$i<j$. If $i,j\leq K$, for some
	$\bar{\omega_{1}}',\bar{\omega_{2}}',\bar{\omega_{3}}'$(which are
	stochastically determined by the observations from the top $K$ arms in
	the queue), by inductive hypothesis,
	\begin{eqnarray*}
		&&~~~~W_{s}(\bar{\omega_{1}},1,\bar{\omega_{2}},0,\bar{\omega_{3}})-W_{s}(\bar{\omega_{1}},0,\bar{\omega_{2}},1,\bar{\omega_{3}})\\
		&&={}  \beta
		\mathbbm{E}[W_{s-1}(\bar{\omega_{1}}',\bar{\omega_{2}}',p_{01},\bar{\omega_{3}}')-W_{s-1}(\bar{\omega_{1}}',p_{01},\bar{\omega_{2}}',\bar{\omega_{3}}')]
		\geq {}  0.
	\end{eqnarray*} Similarly if $i,j>K$,
	\begin{eqnarray*}
		&&~~~~W_{s}(\bar{\omega_{1}},1,\bar{\omega_{2}},0,\bar{\omega_{3}})-W_{s}(\bar{\omega_{1}},0,\bar{\omega_{2}},1,\bar{\omega_{3}})\\
		&&= \beta
		\mathbbm{E}[W_{s-1}(\bar{\omega_{1}}',p_{11},\bar{\omega_{2}}',p_{01},\bar{\omega_{3}}')-W_{s-1}(\bar{\omega_{1}}',p_{01},\bar{\omega_{2}}',p_{11},\bar{\omega_{3}}')] \geq 0.
	\end{eqnarray*}
	The interesting case is $i\leq K<j$. In this case,
	\begin{align*} &
		W_{s}(\bar{\omega_{1}},1,\bar{\omega_{2}},0,\bar{\omega_{3}})-W_{s}(\bar{\omega_{1}},0,\bar{\omega_{2}},1,\bar{\omega_{3}})
		\notag \\ ={}& 1-\epsilon+\beta
		\mathbbm{E}[W_{s-1}(\bar{\omega_{1}}',p_{11},\bar{\omega_{2}}',p_{01},\bar{\omega_{3}}',\bar{\omega_{4}}')-W_{s-1}(\bar{\omega_{1}}',\bar{\omega_{2}}',p_{11},\bar{\omega_{3}}',p_{01},\bar{\omega_{4}}')]
		\notag \\ \geq {} & 1-\epsilon+\beta
		\mathbbm{E}[W_{s-1}(\bar{\omega_{1}}',\bar{\omega_{2}}',p_{11},p_{01},\bar{\omega_{3}}',\bar{\omega_{4}}')-W_{s-1}(\bar{\omega_{1}}',\bar{\omega_{2}}',p_{11},\bar{\omega_{3}}',p_{01},\bar{\omega_{4}}')]
		\notag \\ ={}& (1-\epsilon)(1-\beta)+\beta
		\mathbbm{E}[(1-\epsilon)\notag \\
		&+W_{s-1}(\bar{\omega_{1}}',\bar{\omega_{2}}',p_{11},p_{01},\bar{\omega_{3}}',\bar{\omega_{4}}')
		-W_{s-1}(\bar{\omega_{1}}',\bar{\omega_{2}}',p_{11},\bar{\omega_{3}}',p_{01},\bar{\omega_{4}}')]
		\notag \\ \geq {} & 0,
	\end{align*} \normalsize where the first inequality follows from
	the inductive hypothesis for (B) and the second follows from (A). \\
	Next we will prove (A) by induction. Suppose that $y$ occurs within
	the two expressions in the $i$th and $j$th place, $i<j$. If $i,j\leq
	K$, similarly for some
	$\bar{\omega_{1}}',\bar{\omega_{2}}',\bar{\omega_{3}}'$ (they depend
	on the observations),
	\begin{align*} &
		1-\epsilon+W_{s}(\bar{\omega_{1}},y,\bar{\omega_{2}},\bar{\omega_{3}})-W_{s}(\bar{\omega_{1}},\bar{\omega_{2}},y,\bar{\omega_{3}})
		\notag \\ ={} & 1-\epsilon+\beta
		\mathbbm{E}[W_{s-1}(\bar{\omega_{1}}',\sigma(y),\bar{\omega_{2}}',\bar{\omega_{3}}')-W_{s-1}(\bar{\omega_{1}}',\bar{\omega_{2}}',\sigma(y),\bar{\omega_{3}}')]
		\notag \\ ={} & (1-\epsilon)(1-\beta)+\beta
		\mathbbm{E}[(1-\epsilon)+W_{s-1}(\bar{\omega_{1}}',\sigma(y),\bar{\omega_{2}}',\bar{\omega_{3}}')-W_{s-1}(\bar{\omega_{1}}',\bar{\omega_{2}}',\sigma(y),\bar{\omega_{3}}')]
		\notag \\ \geq {} & 0.
	\end{align*}\normalsize If $i,j>K$, we have
	\begin{align*} &
		1-\epsilon+W_{s}(\bar{\omega_{1}},y,\bar{\omega_{2}},\bar{\omega_{3}})-W_{s}(\bar{\omega_{1}},\bar{\omega_{2}},y,\bar{\omega_{3}})
		\notag \\ = {} & (1-\epsilon)(1-\beta)+\beta
		\mathbbm{E}[(1-\epsilon)+W_{s-1}(\bar{\omega_{1}}',\tau(y),\bar{\omega_{2}}',\bar{\omega_{3}}')-W_{s-1}(\bar{\omega_{1}}',\bar{\omega_{2}}',\tau(y),\bar{\omega_{3}}')]
		\notag \\ \geq {} & 0.
	\end{align*} \normalsize The interesting case is $i\leq K<j$. Let
	$\bar{\omega_{2}}=(\bar{\omega_{21}},x,\bar{\omega_{22}})$, where
	$\bar{\omega_{1}}$ and $\bar{\omega_{21}}$ represent $K-1$ states in
	total. Then
	\begin{eqnarray*}
		&&~~~1-\epsilon+W_{s}(\bar{\omega_{1}},y,\bar{\omega_{2}},\bar{\omega_{3}})-W_{s}(\bar{\omega_{1}},\bar{\omega_{2}},y,\bar{\omega_{3}})\\
		&&={}
		1-\epsilon+W_{s}(\bar{\omega_{1}},y,\bar{\omega_{21}},x,\bar{\omega_{22}},\bar{\omega_{3}})-W_{s}(\bar{\omega_{1}},\bar{\omega_{21}},x,\bar{\omega_{22}},y,\bar{\omega_{3}}).
	\end{eqnarray*} The above expression is a function of $x$ and $y$, of the
	form $ax+by+cxy+d$. To prove the expression above is nonnegative for
	all $x,y\in[0,1]$, we just need to check out that it is true for
	$(x,y)\in \{(0,0),(0,1),(1,0),(1,1)\}$. \\ If $x=y=0$, then
	\begin{eqnarray*}
		&&~~~1-\epsilon+W_{s}(\bar{\omega_{1}},0,\bar{\omega_{21}},0,\bar{\omega_{22}},\bar{\omega_{3}})-W_{s}(\bar{\omega_{1}},\bar{\omega_{21}},0,\bar{\omega_{22}},0,\bar{\omega_{3}})
		\notag \\ &&= {}  1-\epsilon+\beta
		\mathbbm{E}[W_{s-1}(\bar{\omega_{1}}',p_{01},\tau(\bar{\omega_{22}}),\bar{\omega_{3}}',p_{01},\bar{\omega_{4}}')\notag\\
		&&\quad-W_{s-1}(\bar{\omega_{1}}',\tau(\bar{\omega_{22}}),p_{01},\bar{\omega_{3}}',\bar{\omega_{4}}',p_{01})]
		\notag \\&& \geq {}  (1-\epsilon)(1-\beta)+\beta
		\mathbbm{E}[(1-\epsilon)\notag\\
		&&\quad+W_{s-1}(\bar{\omega_{1}}',p_{01},\tau(\bar{\omega_{22}}),\bar{\omega_{3}}',p_{01},\bar{\omega_{4}}')
		-W_{s-1}(\bar{\omega_{1}}',\tau(\bar{\omega_{22}}),\bar{\omega_{3}}',p_{01},\bar{\omega_{4}}',p_{01})]
		\notag \\ &&\geq {}  0.
	\end{eqnarray*} \normalsize If $x=y=1$, then
	\begin{eqnarray*}
		&&~~~1-\epsilon+W_{s}(\bar{\omega_{1}},1,\bar{\omega_{21}},1,\bar{\omega_{22}},\bar{\omega_{3}})-W_{s}(\bar{\omega_{1}},\bar{\omega_{21}},1,\bar{\omega_{22}},1,\bar{\omega_{3}})
		\notag \\&& ={}  1-\epsilon+\beta
		\mathbbm{E}[W_{s-1}(\bar{\omega_{1}}',p_{11},\bar{\omega_{2}}',p_{11},\tau(\bar{\omega_{22}}),\bar{\omega_{3}}')\notag\\
		&&\quad-W_{s-1}(\bar{\omega_{1}}',\bar{\omega_{2}}',p_{11},\tau(\bar{\omega_{22}}),p_{11},\bar{\omega_{3}}')]
		\notag \\ &&\geq {}  (1-\epsilon)(1-\beta)+\beta
		\mathbbm{E}[(1-\epsilon)\notag\\
		&&\quad+W_{s-1}(\bar{\omega_{1}}',\bar{\omega_{2}}',p_{11},p_{11},\tau(\bar{\omega_{22}}),\bar{\omega_{3}}')
		-W_{s-1}(\bar{\omega_{1}}',\bar{\omega_{2}}',p_{11},\tau(\bar{\omega_{22}}),p_{11},\bar{\omega_{3}}')]
		\notag \\&& \geq {}  0.
	\end{eqnarray*}\normalsize If $x=0,y=1$, then
	\begin{eqnarray*}
		&&~~~1-\epsilon+W_{s}(\bar{\omega_{1}},1,\bar{\omega_{21}},0,\bar{\omega_{22}},\bar{\omega_{3}})-W_{s}(\bar{\omega_{1}},\bar{\omega_{21}},0,\bar{\omega_{22}},1,\bar{\omega_{3}})
		\notag \\&& = 2(1-\epsilon)+\beta
		\mathbbm{E}[W_{s-1}(\bar{\omega_{1}}',p_{11},\bar{\omega_{3}}',p_{01},\tau(\bar{\omega_{22}}),\bar{\omega_{4}}')
		\notag\\
		&&\quad-W_{s-1}(\bar{\omega_{1}}',\bar{\omega_{3}}',\tau(\bar{\omega_{22}}),p_{11},\bar{\omega_{4}}',p_{01})]
		\notag \\&& \geq 2(1-\epsilon)(1-\beta)+\beta
		\mathbbm{E}[2-2\epsilon\notag\\
		&&\quad+W_{s-1}(\bar{\omega_{1}}',\bar{\omega_{3}}',p_{01},\tau(\bar{\omega_{22}}),p_{11},\bar{\omega_{4}}')
		-W_{s-1}(\bar{\omega_{1}}',\bar{\omega_{3}}',\tau(\bar{\omega_{22}}),p_{11},\bar{\omega_{4}}',p_{01})]
		\notag \\&& > 0.
	\end{eqnarray*}
	\normalsize If $x=1,y=0$, then
	\begin{eqnarray*}
		&&~~~1-\epsilon+W_{s}(\bar{\omega_{1}},0,\bar{\omega_{21}},1,\bar{\omega_{22}},\bar{\omega_{3}})-W_{s}(\bar{\omega_{1}},\bar{\omega_{21}},1,\bar{\omega_{22}},0,\bar{\omega_{3}})
		\notag \\ &&= \beta
		\mathbbm{E}[W_{s-1}(\bar{\omega_{1}}',p_{11},\tau(\bar{\omega_{22}}),\bar{\omega_{3}}',p_{01},\bar{\omega_{4}}')-W_{s-1}(\bar{\omega_{1}}',p_{11},\tau(\bar{\omega_{22}}),p_{01},\bar{\omega_{3}}',\bar{\omega_{4}}')]
		\notag \\  &&\geq 0.
	\end{eqnarray*} \normalsize Thus (A) is true. In fact, (B) shows that
	the myopic policy is optimal over finite horizons. By contradiction,
	it is easy to show that the myopic policy also maximizes the expected
	total discounted reward and the expected average reward over the
	infinite horizon. Furthermore, our proof does not depend on the
	time-homogeneousness of the system so the optimality result holds even
	if the system parameters are time-varying as long as $p_{11}(t)\ge
	p_{01}(t)$ and $\epsilon$ satisfies~\eqref{myopic:structCondition}.
\end{proof}


\section{Numerical Analysis and Conclusion}\label{sec:numericalCon}

In this section, we illustrate the near-optimality and efficiency of our approximated Whittle index policy for non-homogeneous arms through simulation examples. After the discussions and illustrations on these numerical results, we conclude this paper and propose several future research directions on relevant problems.


\subsection{Numerical Examples}\label{sec:experiments}
We will show that the $4$-iteration approximation algorithm is sufficient to yield the same performance as the exact Whittle index policy and use it to plot the
approximated Whittle index function $W(\omega)$ for the following
parameters: in Fig.~\ref{fig:whittleN}
$p_{11}=0.2, p_{01}=0.9,\beta=0.9, \epsilon=0.1, B=1$; in
Fig.~\ref{fig:whittleP}
$p_{11}=0.6, p_{01}=0.3, \beta=0.9, \epsilon=0.1,B=1$. Note that the
monotonic increasing property of $W(\omega)$ implies the indexability
numerically while the nonlinearity of $W(\omega)$ illustrates its
difference to the myopic policy (with index $\omega B$ as a linear function in~$\omega$).

We now compare the performance of Whittle index policy with the
optimal policy which is computed by dynamic programming over a finite
horizon of length~$T$. In other words, we recursively call the following equation with terminating state $V_{1,0,\beta,m}(\cdot)=0$:
\begin{eqnarray}
	V_{1,T,\beta,m}(\omega) = \max\{V_{1,T,\beta,m}(\omega;u=1);V_{1,T,\beta,m}(\omega;u=0)\},
\end{eqnarray}where $V_{1,T,\beta,m}(\omega;u=1)$ and $V_{1,T,\beta,m}(\omega;u=0)$ are respectively given by~\eqref{eqn:finite1} and~\eqref{eqn:finite0} in terms of $V_{1,T-1,\beta,m}(\cdot)$. Clearly, the number of observed belief states grows {\em exponentially} with both the number of arms (as arms are not decoupled by any relaxation) and the time horizon~$T$ due to the tree-expansion type of the belief update given in~\eqref{eqn:beliefUpdate}. In contrast, our approximate Whittle index has a {\em linear} complexity in both~$T$ and the number of arms. In Fig.~\ref{fig:complexityT} and Fig.~\ref{fig:complexityArms}, we compare the real running times between the optimal policy and our algorithm to illustrate the efficiency of the latter. Note that the optimal policy for any finite time horizon~$T$ provides an upper bound on the total discounted reward over~$T$ achieved by the infinite-horizon optimal policy. This is because one can definitely apply the infinite-horizon optimal policy to the finite-horizon problem up to time~$T$. Henceforth, the near-optimality of our algorithm is well demonstrated by comparing to the finite-horizon optimal policy over the first $T$ steps as shown in Figures~\ref{fig:3}-\ref{fig:18} (see Table~\ref{table1} and Table~\ref{table2} for system parameters). Furthermore, all numerical experiments with randomly generated system parameters showed that setting the iteration number $k=4$ is sufficient for the Whittle indices of all arms to converge such that their rank remains the same as~$k$ increases at each time step~$t$. In other words, setting $k=4$ makes the approximated Whittle index have the same action path as the exact Whittle index, leading to the same performance. When~$k$ becomes smaller, the approximation error will be larger and cause more performance loss as shown in Fig.~\ref{fig:iterCompare}.

We also illustrate the performance of the myopic policy that
chooses the $M$ arms with the largest $\omega_n B_n$ for comparison. From Figures~\ref{fig:3}-\ref{fig:18}, we observe that
Algorithm~\ref{alg:whittle} outperforms the myopic policy. Interestingly, the Whittle index policy may have some performance loss in the middle but eventually catches up with the optimal policy as time goes. This is consistent with the conjecture that Whittle index policy is asymptotically optimal as time goes to infinity as the Lagrangian relaxation should not fundamentally alter the state and action paths of the optimal policy for the original problem from the perspective of large deviation theory~\citep{WW1990}. On the contrary, the myopic policy is unable to follow the optimal action path and never catches up! Since the myopic policy only cares about maximizing the immediate reward, its performance for $T=1$ is optimal (thus better than any other policy) because the state transitions do not matter in this case. Definitely, the myopic policy has the lowest complexity but this advantage is negligible given its significant performance loss and the efficiency of our algorithm compared to the optimal policy as shown in Fig.~\ref{fig:complexityT} and Fig.~\ref{fig:complexityArms}.


\subsection{Conclusion and Future Work}\label{sec:conclusions}

In this paper, we proposed a low-complexity algorithm based on the
idea of value function approximations arisen in solving for the Whittle index policy of a class of RMAB with an infinite state space and an imperfect
observation model. By exploring and exploiting the rich mathematical structure of this RMAB model, our algorithm was designed to be implemented online to control the approximation error such that it becomes equivalent to the original Whittle index policy. Extensive numerical examples showed that our algorithm achieves a near-optimal performance with a complexity linear in the key system parameters such as the time horizon~$T$ and the number of arms. From Figures~\ref{fig:3}-\ref{fig:18}, we observe that in some instances the four-iteration policy is closer to optimality than in other instances. Unfortunately, it is still unknown how to theoretically quantify the performance gap of Whittle index policy to optimality in finite regime: only some asymptotic results were obtained for restless bandits with finite states as both the number of arms and the time horizon go to infinity under the time-average reward criterion and some conditions only verified for bandits with~$2$ or~$3$ states (\citealp{WW1990,WW1991}).

Future work includes the theoretic study of the performance loss of Whittle index policy by more in-depth analysis on the value functions to further improve our algorithm. Another research direction is to consider more complex system models such as non-Markovian state models (see, \eg \citealp{LWZ2011}) or high-dimensional state models (see, \eg \citealp{L2021}) and study the convergence patterns of the state path to approximate the value functions. Future work also includes the generalization of constructing a finite set of linear equations to solve for dynamic programming problems, \eg the simplification of non-linearity by threshold-based first crossing time, the error-control process by backward induction, and further complexity reduction by minimizing the number of linear equations required.



\clearpage

\begin{figure}
	\begin{minipage}[h]{0.5\linewidth}
		\centering
		\includegraphics[height=5.3cm,width=5.3cm]{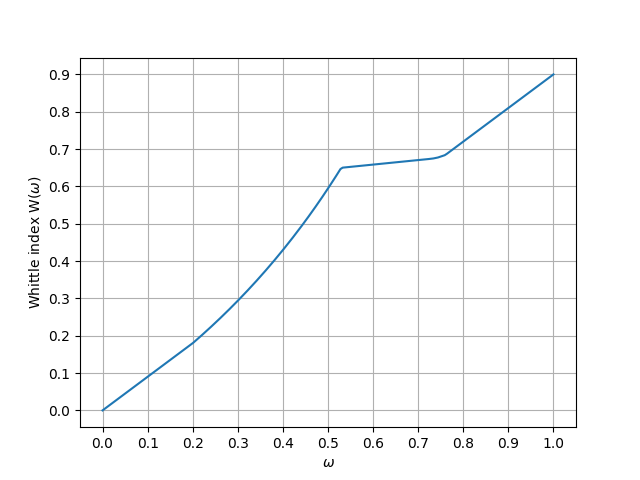}
		\caption{Approximated $W(\omega)$ ($p_{11}<p_{01}$)}
		\label{fig:whittleN}
	\end{minipage}%
	\begin{minipage}[h]{0.5\linewidth}
		\centering
		\includegraphics[height=5.3cm,width=5.3cm]{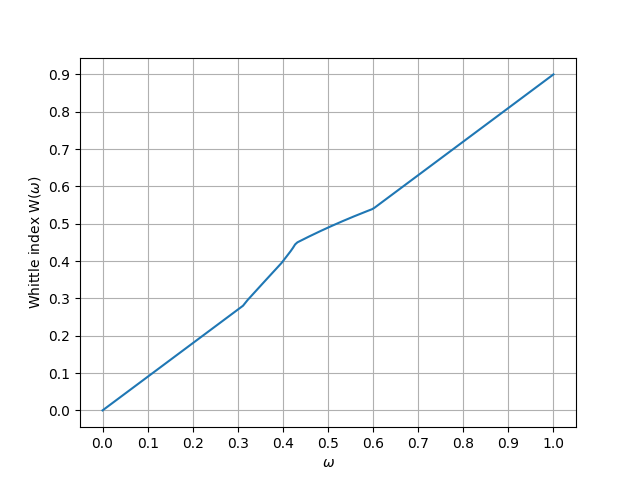}
		\caption{Approximated $W(\omega)$ ($p_{11}>p_{01}$)}
		\label{fig:whittleP}
	\end{minipage}
\end{figure}

\begin{figure}
	\begin{minipage}[h]{0.5\linewidth}
		\centering
		\includegraphics[height=5.3cm,width=5.3cm]{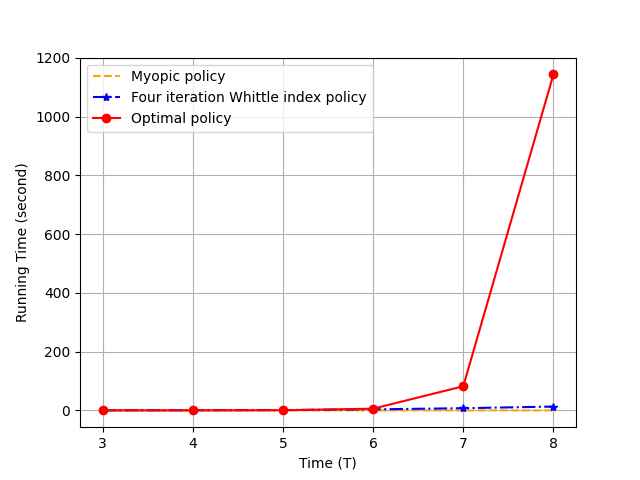}
		\caption{The Complexity with~$T$}
		\label{fig:complexityT}
	\end{minipage}%
	\begin{minipage}[h]{0.5\linewidth}
		\centering
		\includegraphics[height=5.3cm,width=5.3cm]{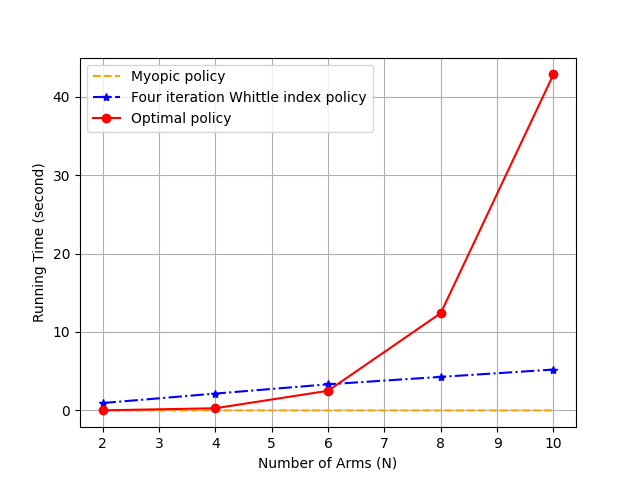}
		\caption{The Complexity with~$N$}
        \label{fig:complexityArms}
	\end{minipage}
\end{figure}

\begin{figure}
	\begin{minipage}[h]{0.5\linewidth}
		\centering
		\includegraphics[height=5.3cm,width=5.3cm]{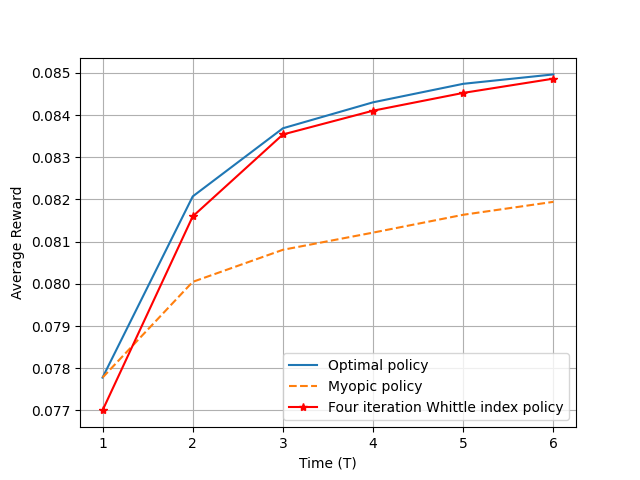}
		\caption{Example-1}
		\label{fig:3}
	\end{minipage}%
	\begin{minipage}[h]{0.5\linewidth}
		\centering
		\includegraphics[height=5.3cm,width=5.3cm]{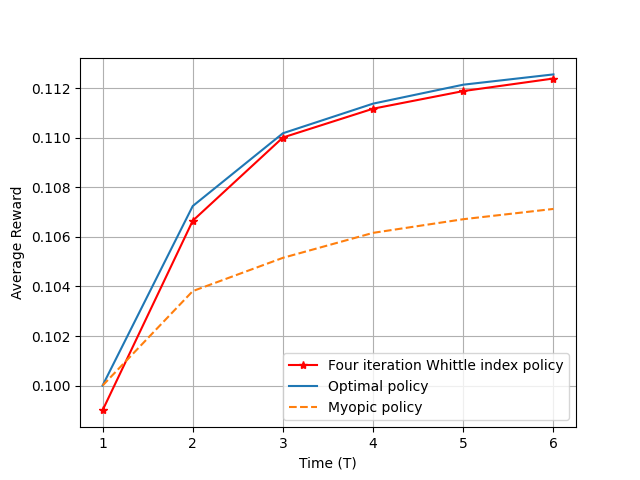}
		\caption{Example-2}
	\end{minipage}
\end{figure}

\begin{figure}
	\begin{minipage}[h]{0.5\linewidth}
		\centering
		\includegraphics[height=5.3cm,width=5.3cm]{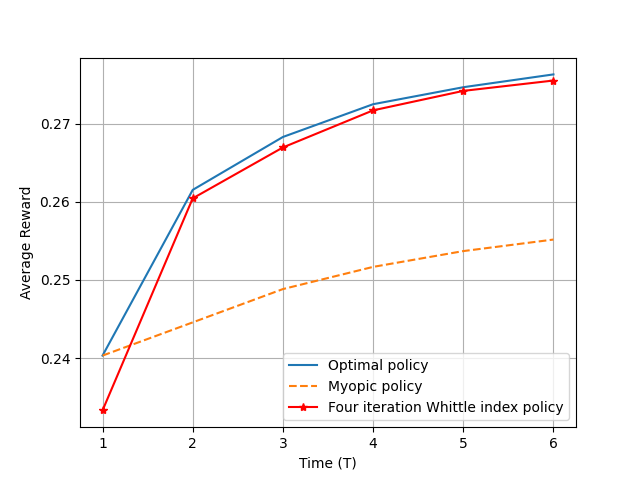}
		\caption{Example-3}
	\end{minipage}%
	\begin{minipage}[h]{0.5\linewidth}
		\centering
		\includegraphics[height=5.3cm,width=5.3cm]{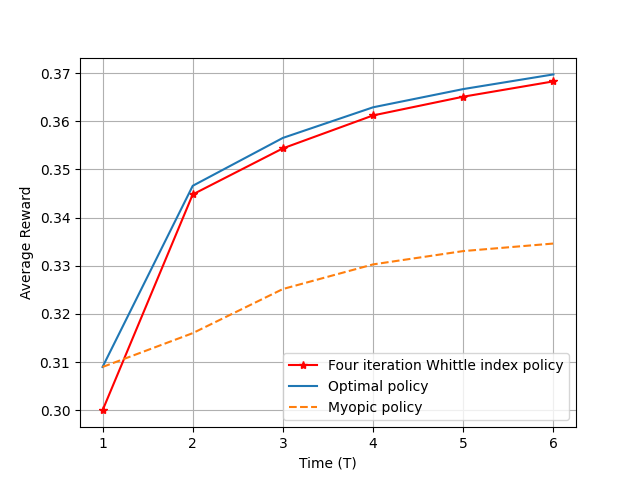}
		\caption{Example-4}
	\end{minipage}
\end{figure}

\begin{figure}
	\begin{minipage}[h]{0.5\linewidth}
		\centering
		\includegraphics[height=5.3cm,width=5.3cm]{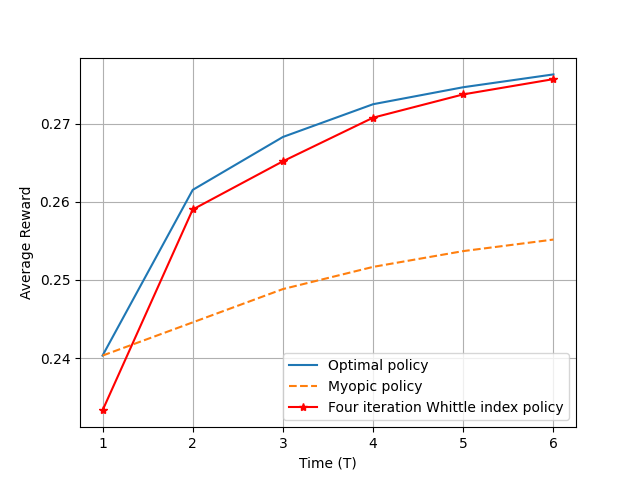}
		\caption{Example-5}
	\end{minipage}%
	\begin{minipage}[h]{0.5\linewidth}
		\centering
		\includegraphics[height=5.3cm,width=5.3cm]{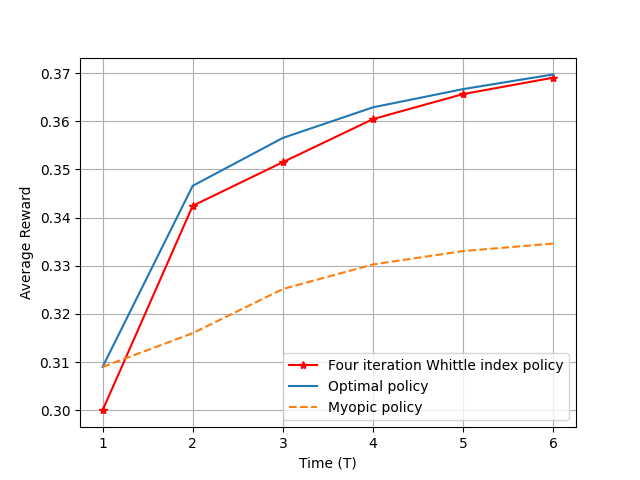}
		\caption{Example-6}
	\end{minipage}
\end{figure}

\begin{figure}
	\begin{minipage}[h]{0.5\linewidth}
		\centering
		\includegraphics[height=5.3cm,width=5.3cm]{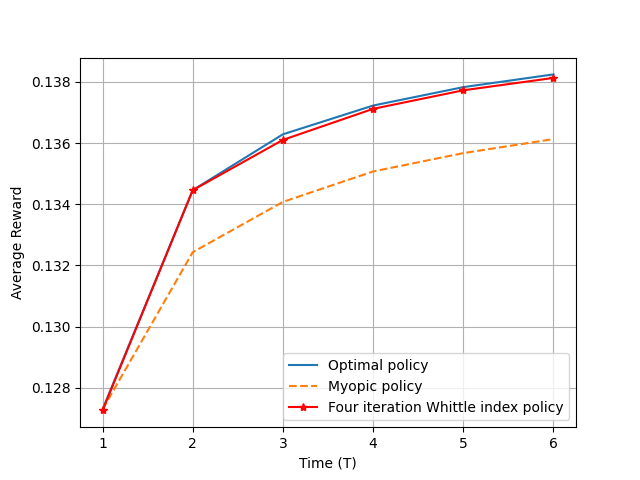}
		\caption{Example-7}
	\end{minipage}%
	\begin{minipage}[h]{0.5\linewidth}
		\centering
		\includegraphics[height=5.3cm,width=5.3cm]{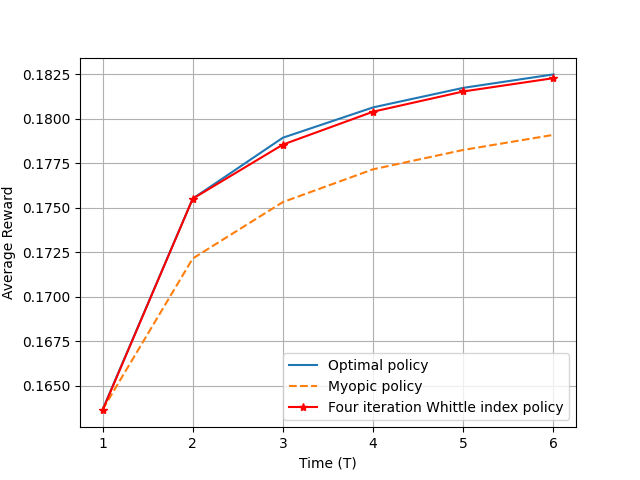}
		\caption{Example-8}
	\end{minipage}
\end{figure}

\begin{figure}
	\begin{minipage}[h]{0.5\linewidth}
		\centering
		\includegraphics[height=5.3cm,width=5.3cm]{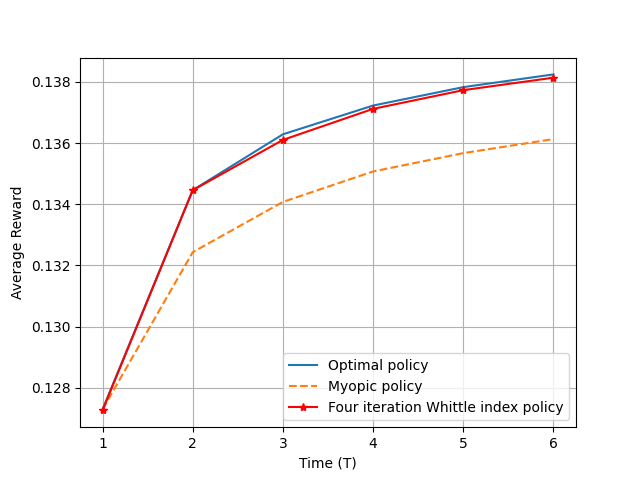}
		\caption{Example-9}
	\end{minipage}%
	\begin{minipage}[h]{0.5\linewidth}
		\centering
		\includegraphics[height=5.3cm,width=5.3cm]{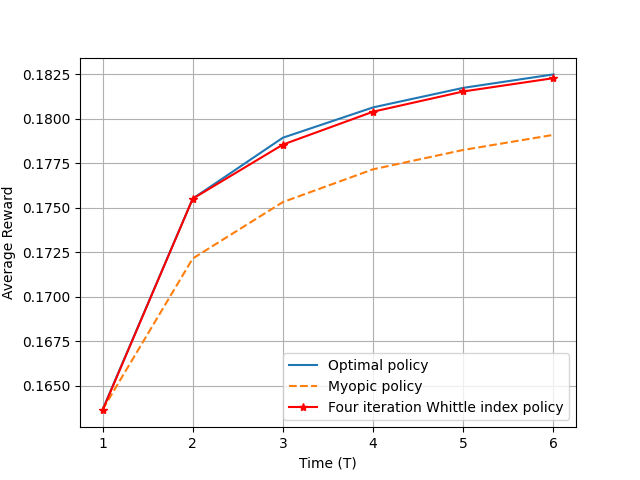}
		\caption{Example-10}
	\end{minipage}
\end{figure}

\begin{figure}
	\begin{minipage}[h]{0.5\linewidth}
		\centering
		\includegraphics[height=5.3cm,width=5.3cm]{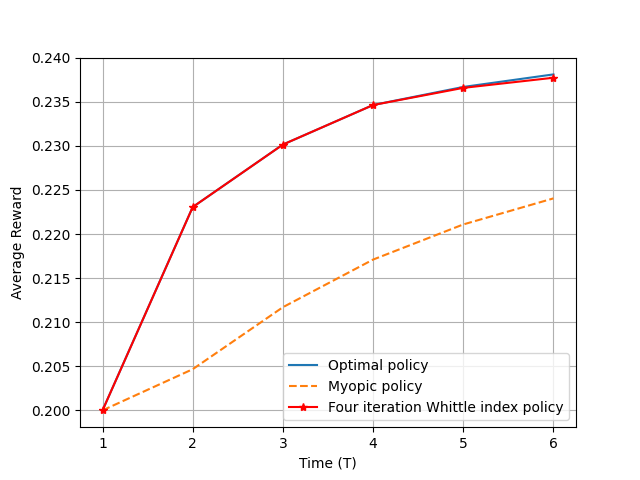}
		\caption{Example-11}
	\end{minipage}%
	\begin{minipage}[h]{0.5\linewidth}
		\centering
		\includegraphics[height=5.3cm,width=5.3cm]{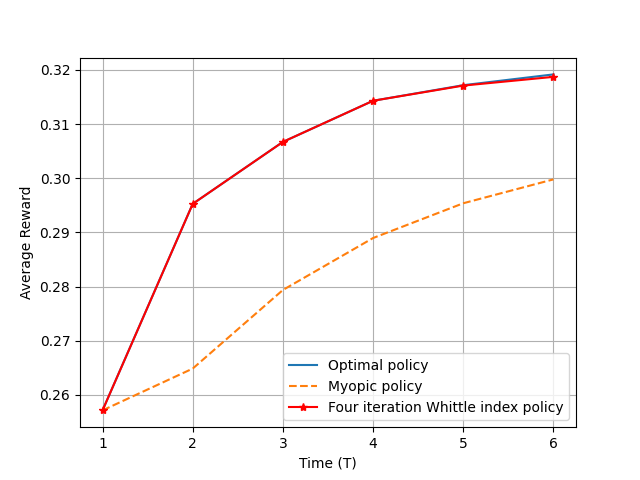}
		\caption{Example-12}
	\end{minipage}
\end{figure}

\begin{figure}
	\begin{minipage}[h]{0.5\linewidth}
		\centering
		\includegraphics[height=5.3cm,width=5.3cm]{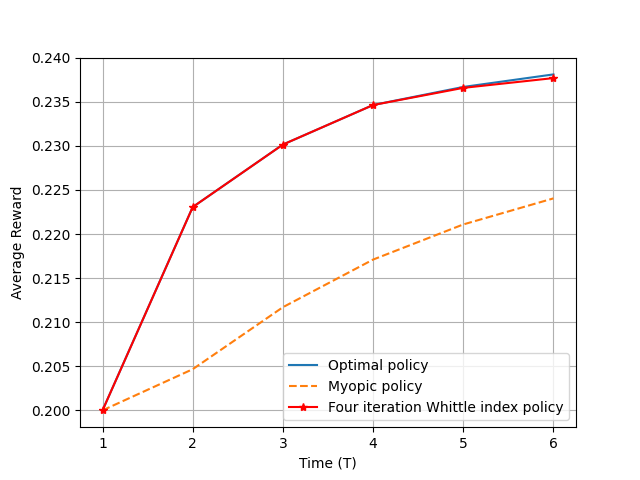}
		\caption{Example-13}
	\end{minipage}%
	\begin{minipage}[h]{0.5\linewidth}
		\centering
		\includegraphics[height=5.3cm,width=5.3cm]{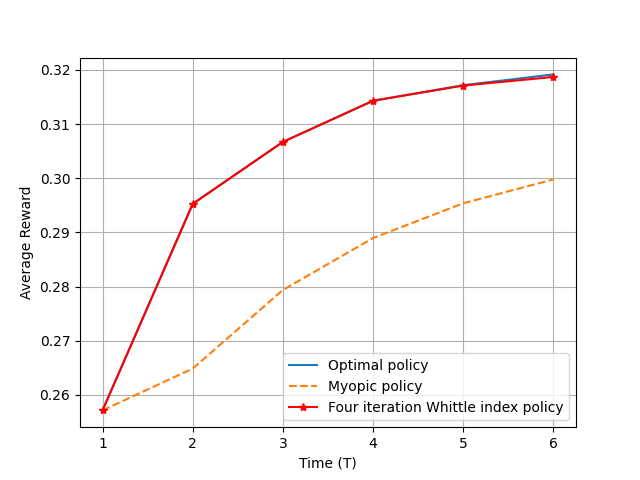}
		\caption{Example-14}
	\end{minipage}
\end{figure}

\begin{figure}
	\begin{minipage}[h]{0.5\linewidth}
		\centering
		\includegraphics[height=5.3cm,width=5.3cm]{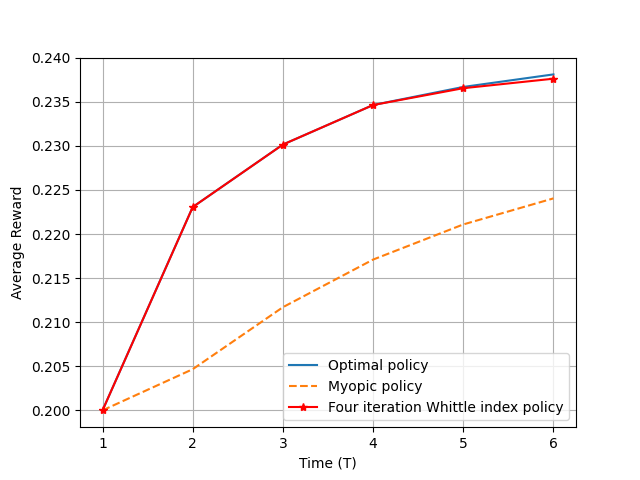}
		\caption{Example-15}
	\end{minipage}%
	\begin{minipage}[h]{0.5\linewidth}
		\centering
		\includegraphics[height=5.3cm,width=5.3cm]{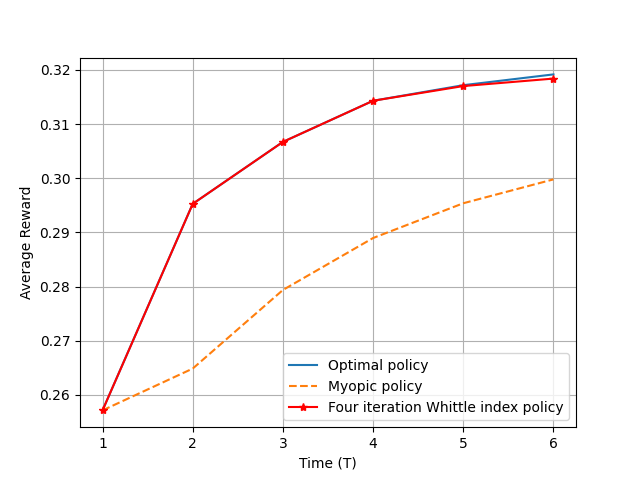}
		\caption{Example-16}
		\label{fig:18}
	\end{minipage}
\end{figure}

\begin{figure}
		\centering
		\includegraphics[height=9cm,width=9cm]{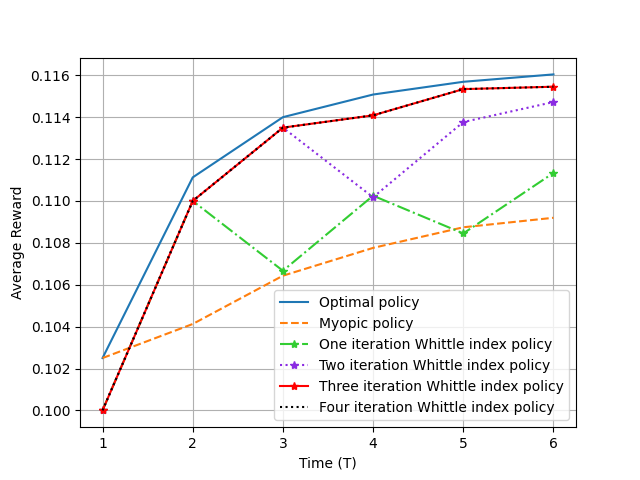}
		\caption{The Performance Comparison for~$k$}
		\label{fig:iterCompare}
\end{figure}

\clearpage

\begin{table}[htbp]
	\linespread{2}
    \caption{Experiment Setting}
	\centering
	\renewcommand\arraystretch{1}
    \let\cline\clineorig
	\begin{tabular}{|c|c|c|}
		\hline
		\multirowcell{3}{System-1} &
		$\{p_{11}^{(i)}\}_{i=1}^{7}$ & $\{0.3, 0.6, 0.4, 0.7, 0.2, 0.6, 0.8\}$ \\ \cline{2-3} &
		$\{p_{01}^{(i)}\}_{i=1}^{7}$ & $\{0.1, 0.4, 0.3, 0.4, 0.1, 0.3, 0.5\}$ \\ \cline{2-3} &
		$\{B_{i}\}_{i=1}^{7}$ & $\{0.8800, 0.2200, 0.3300, 0.1930, 1.0000, 0.2558, 0.1549\}$ \\
		\hline
		\multirowcell{3}{System-2} &
		$\{p_{11}^{(i)}\}_{i=1}^{7}$ & $\{0.6, 0.4, 0.2, 0.2, 0.4, 0.1, 0.3\}$ \\ \cline{2-3} &
		$\{p_{01}^{(i)}\}_{i=1}^{7}$ & $\{0.8, 0.6, 0.4, 0.9, 0.8, 0.6, 0.7\}$ \\ \cline{2-3} &
		$\{B_{i}\}_{i=1}^{7}$ & $\{0.5150, 0.6666, 1.0000, 0.6296, 0.5833, 0.8100, 0.6700\}$ \\
		\hline
		\multirowcell{3}{System-3} &
		$\{p_{11}^{(i)}\}_{i=1}^{7}$ & $\{0.1, 0.4, 0.3, 0.4, 0.1, 0.3, 0.5\}$ \\ \cline{2-3} &
		$\{p_{01}^{(i)}\}_{i=1}^{7}$ & $\{0.3, 0.6, 0.4, 0.7, 0.2, 0.6, 0.8\}$ \\ \cline{2-3} &
		$\{B_{i}\}_{i=1}^{7}$ & $\{0.7273, 0.3636, 0.5000, 0.3377, 1.0000, 0.3939, 0.2955\}$ \\
		\hline
		\multirowcell{3}{System-4} &
		$\{p_{11}^{(i)}\}_{i=1}^{7}$ & $\{0.6, 0.7, 0.2, 0.6, 0.4, 0.5, 0.3\}$ \\ \cline{2-3} &
		$\{p_{01}^{(i)}\}_{i=1}^{7}$ & $\{0.8, 0.4, 0.9, 0.5, 0.7, 0.2, 0.6\}$ \\ \cline{2-3} &
		$\{B_{i}\}_{i=1}^{7}$ & $\{0.4286, 0.5000, 0.5397, 0.5143, 0.5306, 1.0000, 0.6190\}$ \\
		\hline
	\end{tabular}
	\label{table1}
\end{table}


\begin{table}[htbp]
	\linespread{2}
    \caption{Experiment Setting (continued)}
	\centering
	\renewcommand\arraystretch{1}
    \let\cline\clineorig
	\begin{tabular}{|c|c|c|c|c|c|}
		\hline
		System & Example & $\epsilon$ & $\beta$ & Meet threshold & Meet indexability\\
		~ & ~ & ~ & ~ & conditions? & conditions?\\ \hline
		\multirowcell{2}{System-1} & 1 & 0.3 & 0.999 & yes & no \\ \cline{2-6} & 2 & 0.1 & 0.999 & yes & no \\
		\hline
		\multirowcell{4}{System-2} & 3 & 0.3 & 0.29 & yes & yes \\ \cline{2-6} & 4 & 0.1 & 0.29 & yes & yes \\ \cline{2-6} & 5 & 0.3 & 0.48 & no & yes \\ \cline{2-6} & 6 & 0.1 & 0.48 & no & yes \\
		\hline
		\multirowcell{4}{System-3} & 7 & 0.3 & 0.69 & yes & no \\ \cline{2-6} & 8 & 0.1 & 0.69 & yes & no \\ \cline{2-6} & 9 & 0.3 & 0.48 & yes & yes \\ \cline{2-6} & 10 & 0.1 & 0.48 & yes & yes \\
		\hline
		\multirowcell{6}{System-4} & 11 & 0.3 & 0.29 & yes & yes \\ \cline{2-6} & 12 & 0.1 & 0.29 & yes & yes \\ \cline{2-6} & 13 & 0.3 & 0.48 & no & yes \\ \cline{2-6} & 14 & 0.1 & 0.48 & no & yes \\ \cline{2-6} & 15 & 0.3 & 0.999 & no & no \\ \cline{2-6} & 16 & 0.1 & 0.999 & no & no \\
		\hline
	\end{tabular}
	\label{table2}
\end{table}

\section*{Declarations}
\begin{itemize}
    \item {\bf Funding} Not applicable
    \item {\bf Conflict of interest/Competing interests} Not applicable
    \item {\bf Ethics approval} Not applicable
    \item {\bf Consent to participate} Not applicable
    \item {\bf Consent for publication} Yes
    \item {\bf Availability of data and materials} Available upon request
    \item {\bf Code availability} Available upon request
    \item {\bf Authors' contributions} Keqin Liu constructed the proof sketch for each theorem and the main algorithm and contributed to the writing of the paper. Richard Weber outlined the proof strategy for the optimality of the myopic policy in homogeneous systems and contributed to the verification and writing of the paper. Chengzhong Zhang filled out the details of the proofs and conducted the numerical simulations.
\end{itemize}



\end{document}